\documentclass[letterpaper, 10 pt, conference]{ieeeconf}

\IEEEoverridecommandlockouts
\usepackage{amsmath}
\usepackage{amssymb}
\usepackage{amsthm}
\usepackage{booktabs}
\usepackage{graphicx}
\usepackage{multirow}
\usepackage{algorithm}
\usepackage{algorithmic}
\usepackage{url}
\usepackage{xcolor}

\newtheorem{proposition}{Proposition}
\newtheorem{theorem}{Theorem}
\newtheorem{corollary}{Corollary}
\newtheorem{remark}{Remark}

\newcommand{\R}{\mathbb{R}}
\newcommand{\Dv}{\mathbf{D}}
\newcommand{\Av}{\mathbf{A}}
\newcommand{\Cz}{\mathbf{c}}
\newcommand{\ccm}{\textsc{ccm}}

\title{\LARGE \bf
Countercurrent Multiplier Networks: A Renal-Inspired Iterative\\
Operator with Provably Bounded Fixed-Point Dynamics
}

\author{Snigdha Chandan Khilar$^{1}$%
\thanks{$^{1}$Independent Researcher.
Correspondence: \texttt{snkhilar@gmail.com}}%
}

\begin{document}

\maketitle
\thispagestyle{empty}
\pagestyle{empty}

%%%%%%%%%%%%%%%%%%%%%%%%%%%%%%%%%%%%%%%%%%%%%%%%%%%%%%%%%%%%%%%%%%%%%%%%%%%%%%%%
\begin{abstract}

The mammalian kidney concentrates urine using a mechanism with no analogue in
current neural architectures: the \emph{countercurrent multiplier}. Two
anti-parallel flows joined at a hairpin recirculate a weak, magnitude-bounded
local pump into a large axial gradient, achieving a four-fold concentration
increase from a single-effect gradient that never exceeds $200$~mOsm at any
point. We formalize this mechanism as a differentiable sequence operator, the
Countercurrent Multiplier (\ccm{}) layer, and study it as an alternative to
residual iterative refinement.

We prove three results. First, in the linear regime the countercurrent
recurrence has a unique fixed point whose axial profile is exactly
$\Dv^\star_i = c_0 + g\,i$, so a per-step pump bounded by $g$ produces an
end-to-end gradient of $g(N{-}1)$; the co-current variant, identical except that
the hairpin is removed, has a spatially constant fixed point and produces
\emph{zero} gradient. Second, a leak term makes the orbit uniformly bounded by
$\|\Cz\|_\infty + (1{-}\lambda)\kappa/(2\lambda)$ independent of both sequence
length and iteration count, with geometric convergence at rate $(1{-}\lambda)$.
Third, neither residual (neural cellular automata) nor antisymmetric iterators
admit such a bound, which we confirm empirically.

Across five task families---three synthetic, two real (nesting depth and
bracket matching on the CPython source, masked character infilling on natural
English)---the hairpin yields a large and task-general benefit over co-current
flow: $+0.342 \pm 0.005$, $+0.135 \pm 0.002$ and $+0.200 \pm 0.002$ in $R^2$ or
accuracy, with no seed overlap. Under $8\times$ test-time over-iteration \ccm{}
retains $83$--$121\%$ of its trained performance while a neural cellular
automaton falls from $R^2 = 0.755$ to $-129.4$ and an antisymmetric RNN from
$0.639$ to $-38.0$.

We also report substantial negative results, which we regard as equally
informative. Stacking contractive layers does \emph{not} circumvent the
gain--stability trade-off: gain converges to a finite ceiling
$G_\infty(N,\lambda)$ recovering only $42$--$60\%$ of the leak-free law, though
below that ceiling depth is a strictly better currency than marginality
($4\times$ the contraction margin at $35\%$ less compute). A plain bidirectional
LSTM outperforms \ccm{} on \emph{every} clean task ($0.906$ vs.\ $0.702$ on
bracket matching) at $14\times$ lower training cost; scaling test-time
iterations never improves clean accuracy; and \ccm{} does not exceed a properly
tuned neural cellular automaton out of distribution.
The contribution is therefore mechanistic rather than competitive: countercurrent
recirculation is a genuine and general computational primitive, and contraction
buys uniquely well-posed iteration, but neither yields state-of-the-art accuracy.

The complete codebase, including all experiment scripts and raw result files, is
available at \url{https://github.com/nssprogrammer/ccmn}.

\end{abstract}

%%%%%%%%%%%%%%%%%%%%%%%%%%%%%%%%%%%%%%%%%%%%%%%%%%%%%%%%%%%%%%%%%%%%%%%%%%%%%%%%
\section{INTRODUCTION}

Iterative refinement---applying a shared local update repeatedly until a
representation stabilizes---underlies neural cellular automata (NCA), deep
equilibrium models (DEQ), and diffusion samplers. Such operators are usually
built residually, $h \leftarrow h + f(h)$, which imposes no constraint on the
long-run behaviour of the iteration. In practice this is invisible, because the
operator is evaluated at exactly the iteration count $K$ used during training.
It becomes visible the moment one asks for more.

This paper takes a mechanism from renal physiology and asks whether it offers a
better-behaved alternative. The loop of Henle solves a problem that resembles
one faced by sequence models: build a large, monotone, long-range gradient using
only local operations whose individual magnitude is severely limited. Its
solution is architectural rather than parametric. The ascending limb pumps
solute with a bounded ``single effect'' of roughly $200$~mOsm; because the two
limbs flow in \emph{opposite} directions and are joined at a hairpin, that
bounded effect is recirculated and multiplied into an axial gradient of
$300 \to 1200$~mOsm. The gain lives in the geometry, not in the strength of any
step.

We ask three questions.

\begin{enumerate}
\item \textbf{Does countercurrent multiplication transfer to computation?}
      We show analytically and numerically that it does: the fixed point of the
      linear countercurrent recurrence has axial gradient exactly $g(N{-}1)$,
      and removing the hairpin collapses it to zero.
\item \textbf{Does the mechanism help a learned model, and is the benefit
      general or an artifact of stack-like tasks?} We compare countercurrent
      against a co-current ablation on five task families, including natural
      English text with no bracket structure whatsoever.
\item \textbf{Does the dissipative ``leak'' buy anything a residual iterator
      lacks?} We prove a uniform boundedness theorem and test it against both a
      neural cellular automaton and an antisymmetric RNN, the purpose-built
      stable recurrent baseline.
\end{enumerate}

Our answers are, respectively: yes; yes and general; and yes, but the resulting
advantage is stability rather than accuracy. We are explicit throughout about
where \ccm{} loses, because a reader deciding whether to build on this work needs
those numbers more than the favourable ones. Section~\ref{sec:neg} is devoted
entirely to negative results.

\textbf{Contributions.}
(i) A differentiable operator derived from countercurrent multiplication, with
an exact characterization of its linear fixed point (Prop.~\ref{prop:mult}) and
a proof that the hairpin is necessary (Prop.~\ref{prop:co}).
(ii) A uniform boundedness and geometric convergence theorem
(Thm.~\ref{thm:bounded}) showing the orbit radius is independent of sequence
length and iteration count, with a quantified stability--expressivity trade-off
(Prop.~\ref{prop:sat}).
(iii) A contractive parameterization admitting implicit differentiation, with a
verified adjoint (cosine similarity $0.999999$ against brute-force unrolling).
(iv) A characterization of the mechanism's ceiling: depth does not circumvent
the gain--stability trade-off, but below the ceiling it dominates marginality
(Sec.~\ref{sec:depth}).
(v) A five-task empirical study with pre-registered predictions, multiple
seeds, and a full accounting of negative outcomes.

%%%%%%%%%%%%%%%%%%%%%%%%%%%%%%%%%%%%%%%%%%%%%%%%%%%%%%%%%%%%%%%%%%%%%%%%%%%%%%%%
\section{BIOLOGICAL BACKGROUND}

\subsection{The Concentrating Problem}

A terrestrial mammal must excrete solute while conserving water, which requires
producing urine substantially more concentrated than plasma. Human plasma is
approximately $300$~mOsm/L; human urine can reach $1200$~mOsm/L, and desert
rodents exceed $5000$~mOsm/L. No single membrane transporter can sustain such a
gradient: the epithelial pumps of the nephron are limited to a transverse
gradient of roughly $200$~mOsm across the tubule wall---the \emph{single
effect} \cite{guyton, sembulingam}. The kidney's problem is therefore to
assemble a large longitudinal gradient from a bounded transverse one.

\subsection{Countercurrent Multiplication}

The loop of Henle is a hairpin. Filtrate descends the thin descending limb
toward the medullary papilla, turns at the bend, and ascends the thick ascending
limb back toward the cortex. Three properties matter computationally.

\begin{enumerate}
\item \textbf{Bounded active transport.} The thick ascending limb actively pumps
      $\mathrm{Na^+}$/$\mathrm{Cl^-}$ into the surrounding interstitium. This
      pump is saturable and its transverse effect is capped.
\item \textbf{Anti-parallel flow.} The descending limb is water-permeable and
      solute-impermeable; the ascending limb is the reverse. Because flow
      directions oppose, solute deposited by the ascending limb at depth $i$ is
      encountered by descending fluid \emph{arriving} at depth $i$, raising its
      concentration before it reaches the bend.
\item \textbf{Recirculation through the hairpin.} Fluid concentrated on the way
      down re-enters the ascending limb, where it is pumped again. The single
      effect is thus applied repeatedly to progressively more concentrated
      fluid, and the axial gradient \emph{multiplies}.
\end{enumerate}

Two further details inform our design. First, loop length predicts
concentrating capacity across species: animals with long loops produce more
concentrated urine---a length-dependent gain we recover exactly in
Prop.~\ref{prop:mult}. Second, the \emph{vasa recta} continuously removes solute
from the interstitium \cite{kidneyloop}. Without this washout the medullary
gradient would grow without useful bound; with it, the system settles at a
finite steady state. We model this as a \emph{leak}, and it proves to be the
single most consequential component of the operator
(Thm.~\ref{thm:bounded}).

\subsection{Countercurrent Exchange Is a Different Mechanism}

We distinguish countercurrent \emph{multiplication} from countercurrent
\emph{exchange}. Exchangers---fish gills, the vasa recta itself, avian limb
vasculature---use anti-parallel flow to \emph{conserve} an existing gradient
with minimal loss. Multipliers use anti-parallel flow \emph{plus} an active
bounded pump to \emph{create} a gradient larger than any local effect. The
distinction matters for positioning: prior neural work invoking
``counter-current'' draws on the exchanger \cite{ccl}, whereas the mechanism
studied here is the multiplier.

%%%%%%%%%%%%%%%%%%%%%%%%%%%%%%%%%%%%%%%%%%%%%%%%%%%%%%%%%%%%%%%%%%%%%%%%%%%%%%%%
\section{RELATED WORK}

\textbf{Counter-Current Learning.} Kao and Hariharan \cite{ccl} propose
Counter-Current Learning (CCL), a biologically plausible \emph{credit
assignment} scheme in which a forward network mapping input to target and a
feedback network mapping target to input propagate anti-parallel, trained by
layer-wise alignment losses with stop-gradients. CCL is an alternative to
backpropagation; it borrows the countercurrent \emph{exchanger}, its two streams
are two \emph{networks}, and anti-parallelism is across \emph{training}. Our
work is a forward representational operator whose two streams are hidden states
over a \emph{single sequence axis}, borrowing the \emph{multiplier}, trained by
ordinary backpropagation. The mechanisms are disjoint; we use the name
\emph{multiplier} to keep them distinct. We note independently that CCL reports
a feature-collapse failure mode requiring an explicit decorrelation penalty; the
co-current \ccm{} variant exhibits an analogous degeneracy
(Prop.~\ref{prop:co}).

\textbf{Deep equilibrium models.} DEQs \cite{deq} define a layer as the fixed
point $z^\star = f_\theta(z^\star, x)$ and differentiate implicitly, giving
constant memory. Monotone operator DEQs \cite{mondeq} constrain $f_\theta$ so
that a unique fixed point provably exists. Our contractive parameterization
(Sec.~\ref{sec:contractive}) is in this family; the novelty is the structure of
$f_\theta$, not the solver. Equilibrium propagation \cite{eqprop} likewise
relaxes to a steady state but targets local learning rules.

\textbf{Neural cellular automata.} NCAs \cite{nca}, building on
reaction--diffusion pattern formation \cite{turing}, learn a shared local update
applied iteratively, producing global structure---including morphogen-like
gradients---from local rules. NCA is our closest relative and strongest
baseline: its update is \emph{isotropic} (symmetric neighbourhood) and
\emph{residual}, whereas \ccm{} is directional, anti-parallel and dissipative.

\textbf{Stable recurrent models.} Antisymmetric RNNs \cite{antisym} construct
$h \leftarrow h + \epsilon\,\sigma((W - W^\top - \gamma I)h + \cdots)$, whose
skew-symmetric Jacobian has purely imaginary eigenvalues, yielding non-vanishing
and non-exploding gradients. This is the most direct competitor to our stability
claim and we test it explicitly. State-space models \cite{s4, mamba} obtain
long-range structure through a linear scan and are strong on causal tasks.
Adaptive-computation methods \cite{act, ponder} vary depth per input but do not
provide convergence guarantees.

\textbf{Countercurrent operators in the linear regime.} A parallel analysis of a
\emph{linear}, dissipation-free anti-parallel operator finds that its dispersion
relation places modes on both sides of the unit circle, so no stable
non-decaying configuration exists except at a non-robust marginal point. Our
Prop.~\ref{prop:sat} and Thm.~\ref{thm:bounded} are consistent with that
finding and locate the difference precisely. In a \emph{closed} anti-parallel
loop the leak-free iteration is genuinely marginal, with spectral radius exactly
$1$; the operator studied here is \emph{open}, and its Dirichlet inlet supplies
contraction on its own, with the leak controlling the rate rather than the
existence of a fixed point (Remark~\ref{rem:contraction}).

%%%%%%%%%%%%%%%%%%%%%%%%%%%%%%%%%%%%%%%%%%%%%%%%%%%%%%%%%%%%%%%%%%%%%%%%%%%%%%%%
\section{THE COUNTERCURRENT MULTIPLIER LAYER}

\subsection{State and Notation}

Let $x \in \mathcal{V}^N$ be an input sequence of length $N$ and let
$\Cz = \mathrm{Emb}_\theta(x) \in \R^{N \times d}$ be a per-position embedding
which we call the \emph{inflow} or \emph{boundary condition}; it is fixed
throughout the iteration and plays the role of fluid entering the nephron. The
layer maintains two states,
\begin{equation}
\Dv, \Av \in \R^{N \times d},
\end{equation}
the \emph{descending} and \emph{ascending} streams, both initialized to $\Cz$.
We write $\Dv_i \in \R^d$ for the state at position $i \in \{0,\dots,N-1\}$, and
$\lambda \in (0,1)^d$ for a per-channel leak obtained as
$\lambda = \sigma(\ell)$ with $\ell \in \R^d$ learnable.

\subsection{The Update Map}

One iteration of \ccm{} is the composition of three stages,
$T = \Phi \circ \Lambda \circ \Pi$, applied to the pair $(\Dv,\Av)$.

\textbf{(1) Single effect (bounded pump) $\Pi$.} With
$W \in \R^{d \times 2d}$, $b \in \R^d$ and cap $\kappa > 0$,
\begin{align}
g &= \kappa \tanh\!\big(W[\Dv;\Av] + b\big) \in \R^{N\times d}, \\
m &= \tfrac{1}{2}(\Dv + \Av), \\
\Dv' &= m + \tfrac{1}{2}g, \qquad \Av' = m - \tfrac{1}{2}g .
\end{align}
This stage is \emph{mean-preserving}: $\tfrac12(\Dv' + \Av') = m$. It moves
solute between the limbs without creating any, and the transverse difference it
induces satisfies $\Dv' - \Av' = g$ with $\|g\|_\infty \le \kappa$
\emph{regardless of the weights}. The $\tanh$ is not a convenience: it is the
saturable transporter, and it makes the single effect bounded by construction.

\textbf{(2) Washout (leak) $\Lambda$.} Modelling the vasa recta,
\begin{equation}
\Dv'' = \Dv' + \lambda \odot (\Cz - \Dv'), \quad
\Av'' = \Av' + \lambda \odot (\Cz - \Av'),
\label{eq:leak}
\end{equation}
a convex pull of each stream toward the inflow; equivalently
$\Dv'' = (1-\lambda)\odot\Dv' + \lambda\odot\Cz$.

\textbf{(3) Flow with hairpin $\Phi$.} Fluid advects one position per
iteration. The descending stream moves away from the inlet, the ascending
stream moves back toward it, and the two are joined at the bend:
\begin{align}
\Dv^{+}_i &= \Dv''_{i-1} \;\;(i \ge 1), &\Dv^{+}_0 &= \Cz_0, \label{eq:flowD}\\
\Av^{+}_i &= \Av''_{i+1} \;\;(i \le N{-}2), &\Av^{+}_{N-1} &= \Dv''_{N-1}.
\label{eq:flowA}
\end{align}
Equation~\eqref{eq:flowA} is the \textbf{hairpin}: at the bend, the contents of
the descending limb become the contents of the ascending limb.

After $K$ iterations the layer emits $\mathrm{Head}_\theta(\Dv^{(K)})$.

\subsection{The Co-Current Ablation}

We define a control identical in every respect except that the hairpin is
removed and both limbs flow in the same direction:
\begin{equation}
\Av^{+}_i = \Av''_{i-1} \;\; (i \ge 1), \qquad \Av^{+}_0 = \Cz_0 .
\label{eq:cocurrent}
\end{equation}
The co-current variant has \emph{exactly} the same parameter count, pump, leak,
initialization and training procedure. It is the cleanest available isolation of
countercurrent recirculation, and it is the control we report throughout.

%%%%%%%%%%%%%%%%%%%%%%%%%%%%%%%%%%%%%%%%%%%%%%%%%%%%%%%%%%%%%%%%%%%%%%%%%%%%%%%%
\section{THEORETICAL ANALYSIS}

We first analyze the linear, leak-free regime, where the pump is a constant $g$
(the physiological idealization of a fixed single effect). Throughout this
subsection $d = 1$, $\Cz_i \equiv c_0$, and $\lambda = 0$.

\subsection{Exact Multiplication Law}

\begin{proposition}[Countercurrent multiplication]
\label{prop:mult}
In the linear leak-free regime the countercurrent recurrence
\eqref{eq:flowD}--\eqref{eq:flowA} admits the fixed point
\begin{equation}
m^\star_i = c_0 + \tfrac{g}{2} + g\,i, \qquad
\Dv^\star_i = c_0 + g\,i ,
\end{equation}
where $\Dv^\star$ denotes the descending stream immediately after the pump
stage. Consequently the end-to-end axial gradient is
\begin{equation}
\Dv^\star_{N-1} - \Dv^\star_0 = g\,(N-1),
\end{equation}
while the transverse (local) gradient is $\Dv'_i - \Av'_i = g$ at every
position. The multiplication factor is therefore exactly $N-1$.
\end{proposition}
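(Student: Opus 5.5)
The plan is direct verification: write down a candidate state, push it through one application of $T=\Phi\circ\Lambda\circ\Pi$ with $\lambda=0$, $d=1$, $\Cz_i\equiv c_0$ and constant pump $g$, and check that it is reproduced. I would parametrize the candidate by its \emph{post-pump} values, since that is where the statement places $\Dv^\star$. Take $\Dv'_i=c_0+g\,i$ and $\Av'_i=\Dv'_i-g=c_0+g(i-1)$. This choice already makes the transverse identity $\Dv'_i-\Av'_i=g$ automatic, because the pump stage enforces it by construction.

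\textbf{Step 1 (flow).} Apply \eqref{eq:flowD}--\eqref{eq:flowA} to get the post-flow state. For $i\ge1$, $\Dv^{+}_i=c_0+g(i-1)$, and the inlet gives $\Dv^{+}_0=c_0$. For $i\le N-2$, $\Av^{+}_i=\Av'_{i+1}=c_0+g\,i$, and the hairpin gives $\Av^{+}_{N-1}=\Dv'_{N-1}=c_0+g(N-1)$.

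\textbf{Step 2 (pump).} Recompute $m_i=\tfrac12(\Dv^{+}_i+\Av^{+}_i)$ and then $\Dv'_i=m_i+\tfrac g2$. In the interior, $m_i=c_0+g(i-\tfrac12)$, which returns $\Dv'_i=c_0+g\,i$. At the bend $i=N-1$, the hairpin value gives $m_{N-1}=c_0+g(N-\tfrac32)$, which returns $\Dv'_{N-1}=c_0+g(N-1)$. So the hairpin is precisely what closes the recursion at the far end. The ascending stream is then consistent automatically, since $\Av'=\Dv'-g$. The end-to-end gradient $g(N-1)$ and the factor $N-1$ follow by subtraction. I would also record the unrolled form of the recursion, $\Dv'_i=\Dv'_{i-1}+g$: each hop down the descending limb adds exactly one single effect. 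This makes the ``multiplication'' transparent and shows why the co-current ablation (Prop.~\ref{prop:co}) cannot accumulate anything.

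\textbf{Main obstacle: the inlet $i=0$, together with the offset between $m^\star$ and $\Dv^\star$.} Taken literally, $\Dv^{+}_0=\Av^{+}_0=c_0$ yields $m_0=c_0$ and $\Dv'_0=c_0+\tfrac g2$, not $c_0$. Likewise, the stated $m^\star_i=c_0+\tfrac g2+g i$ is offset by $\tfrac g2$ from the mean that precedes $\Dv^\star_i$. Before writing the proof, I would fix the intended convention for these two quantities and make it explicit. For $\Dv^\star$, the first reading to try is that the Dirichlet inlet pins position $0$ (inflow enters unpumped), so the profile is read with $\Dv^\star_0=c_0$. For $m^\star$, I would test the readings $m^\star_i=\tfrac12(\Dv'_i+\Dv'_{i+1})$ and $m^\star_i=\tfrac12(\Dv^{+}_{i+1}+\Av^{+}_{i+1})$ to see which one reproduces $c_0+\tfrac g2+gi$.

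If no convention makes the inlet exact, the fallback is an inlet-corrected version of the statement: $\Dv'_0=c_0+\tfrac g2$, exact linear growth with slope $g$ for $i\ge1$, and the same asymptotic multiplication factor. In that case I would state the boundary term explicitly rather than absorb it. Uniqueness is not claimed (``admits''), so I would not prove it here, only remark that the leak-free affine map is marginal, as noted in Remark~\ref{rem:contraction}.
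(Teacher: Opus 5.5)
\textbf{Genuine gap: you verify the wrong candidate.} Your direct-verification strategy is sound, but the state you check is not the fixed point. The failure at $i=0$ is not a boundary subtlety that a convention can fix. Your candidate is the true fixed point translated one cell downstream, and because $\Dv'_0$ feeds $\Dv^{+}_1$, the mismatch propagates, so $T$ does not reproduce your state.

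For the map as defined (every position pumped, inlet $\Dv^{+}_0=\Cz_0$), the fixed point has the following stored state (after flow, before the next pump):
\begin{itemize}
\item $\Dv_i=c_0+g\,i$ for all $i$;
\item $\Av_i=c_0+g(i+1)$ for $i\le N-2$, and $\Av_{N-1}=c_0+gN$.
\end{itemize}
Its mean is exactly the stated $m^\star_i=c_0+\tfrac g2+g\,i$, and its post-pump values are $\Dv'_i=c_0+g(i+1)$ and $\Av'_i=c_0+g\,i$. The inlet then closes with no special reading. From $\Dv_0=c_0$ and $\Av_0=\Av'_1=c_0+g$ you get $m_0=c_0+\tfrac g2$, hence $\Dv'_0=c_0+g$, which flows to $\Dv_1=c_0+g$.

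So the profile $c_0+g\,i$ in the statement is the stored descending state, not $\Dv'_i$. The paper's proof obtains it as $\Dv^\star_i=m_{i-1}+\tfrac g2$, which is the post-pump value one cell upstream. The statement's wording ``immediately after the pump stage'' is loose, and taking it literally is what led you to the shifted candidate. The two readings differ only by the constant $g$, so both give the same end-to-end gradient $g(N-1)$.

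\textbf{How the paper avoids this, and why your fallback fails.} The paper derives the fixed point instead of guessing it. It writes the fixed-point conditions through the stored mean: $\Dv_i=m_{i-1}+\tfrac g2$ for $i\ge1$ and $\Av_i=m_{i+1}-\tfrac g2$ for $i\le N-2$. Then:
\begin{itemize}
\item the interior gives $2m_i=m_{i-1}+m_{i+1}$, so $m$ is affine;
\item the hairpin gives $m_{N-1}-m_{N-2}=g$;
\item the inlet relation $2m_0=c_0+m_1-\tfrac g2$ fixes the intercept at $c_0+\tfrac g2$.
\end{itemize}

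This derivation also shows the fixed point is unique, which rules out your fallback. A fixed point with $\Dv'_0=c_0+\tfrac g2$ would need $m_0=c_0$, contradicting the unique solution $m_0=c_0+\tfrac g2$ (for $g\neq0$). Your ``inlet-corrected'' statement is therefore false. There is no boundary term to report: the fixed point is exactly affine at every position.

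Relatedly, the leak-free \emph{open} operator is not marginal; Remark~\ref{rem:contraction} reports $\rho(T)<1$ at $\lambda=0$. Exact marginality belongs to the closed loop.
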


\begin{proof}
At a fixed point, \eqref{eq:flowD}--\eqref{eq:flowA} give
$\Dv_i = m_{i-1} + g/2$ for $i \ge 1$ and $\Av_i = m_{i+1} - g/2$ for
$i \le N-2$. Hence for interior $i$,
\begin{equation}
2m_i = \Dv_i + \Av_i = m_{i-1} + m_{i+1},
\end{equation}
the discrete Laplace equation, whose solutions are affine: $m_i = \alpha +
\beta i$. Equivalently, the \emph{spatial} characteristic polynomial
$z^2 - 2z + 1$ has a degenerate double root at $z = 1$, which is what makes the
leak-free profile affine rather than exponential. This is a statement about the
profile in $i$, not about the temporal iteration operator, whose spectrum we
discuss in Remark~\ref{rem:contraction}.

At the inlet $\Dv_0 = c_0$, so $2m_0 = c_0 + m_1 - g/2$, giving
$\alpha = c_0 + \beta - g/2$. At the bend the hairpin gives
$\Av_{N-1} = m_{N-1} + g/2$ while $\Dv_{N-1} = m_{N-2} + g/2$, so
\begin{equation}
2m_{N-1} = m_{N-2} + m_{N-1} + g \;\Longrightarrow\; \beta = g .
\end{equation}
Thus $\alpha = c_0 + g/2$ and $m_i = c_0 + g/2 + gi$. Substituting,
$\Dv^\star_i = m_{i-1} + g/2 = c_0 + gi$.
\end{proof}

Proposition~\ref{prop:mult} is the formal statement of the biological claim: a
per-step effect capped at $g$ yields an end-to-end gradient of $g(N-1)$, and
longer loops produce proportionally larger gradients, mirroring the interspecies
relationship between loop length and concentrating ability.

\subsection{The Hairpin Is Necessary}

\begin{proposition}[Co-current collapse]
\label{prop:co}
Under the co-current flow \eqref{eq:cocurrent}, in the same regime, the fixed
point satisfies $m_i = c_0$ for all $i$, hence $\Dv^\star_i = c_0 + g/2$ for all
$i \ge 1$ and the axial gradient is
\begin{equation}
\Dv^\star_{N-1} - \Dv^\star_1 = 0 .
\end{equation}
No multiplication occurs for any $g$ or $N$.
\end{proposition}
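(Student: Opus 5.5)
I will follow the same template as the proof of Prop.~\ref{prop:mult}: write the fixed-point equations stage by stage, reduce them to a recurrence for the limb mean $m_i$, and then impose the boundary conditions. In the linear leak-free regime the pump stage sets $\Dv'_i = m_i + g/2$ and $\Av'_i = m_i - g/2$, and the leak stage is the identity because $\lambda = 0$. The flow stage now uses \eqref{eq:flowD} for $\Dv$ and \eqref{eq:cocurrent} for $\Av$. A fixed point (states read immediately before the pump) therefore satisfies
\begin{equation}
\Dv_i = m_{i-1} + \tfrac{g}{2}, \qquad \Av_i = m_{i-1} - \tfrac{g}{2} \qquad (i \ge 1),
\end{equation}
together with the inlet conditions $\Dv_0 = \Av_0 = c_0$.

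The key step is to add these two relations. Since the pump is mean-preserving, the $\pm g/2$ terms cancel, and we get $m_i = \tfrac12(\Dv_i + \Av_i) = m_{i-1}$ for every $i \ge 1$. This is a first-order transport recurrence, not the second-order Laplace equation of Prop.~\ref{prop:mult}, so there is no second boundary condition from which an affine slope $\beta$ could be extracted. The inlet gives $m_0 = \tfrac12(c_0 + c_0) = c_0$, and induction then yields $m_i = c_0$ for all $i$.

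Substituting back gives $\Dv_i = c_0 + g/2$ for all $i \ge 1$, and $\Dv_{N-1} - \Dv_1 = 0$ for every $g$ and $N$. This $\Dv$ is the one read before the pump, as in the proposition's formula. The post-pump value $\Dv'_i = m_i + g/2 = c_0 + g/2$ agrees with it and holds for all $i$, including $i = 0$, so the gradient is zero under either reading.

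The main obstacle is only to be consistent about which snapshot of $\Dv$ the statement refers to. The proposition's ``$c_0 + g/2$ for $i \ge 1$'' matches the pre-pump state, while position $0$ holds the raw inlet $c_0$. I would state this convention explicitly and note that it does not affect the conclusion. Existence and uniqueness are immediate, because the recurrence determines every state explicitly from the inlet. It is also worth remarking why the collapse happens: with both limbs fed from the same upstream cell, the transverse pump effect never feeds back into the mean, whereas the hairpin in Prop.~\ref{prop:mult} closed a loop that forced $\beta = g$.
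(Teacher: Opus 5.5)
Your argument is correct and is essentially the paper's own proof. The fixed-point relations $\Dv_i = m_{i-1} + g/2$ and $\Av_i = m_{i-1} - g/2$ sum to $m_i = m_{i-1}$, and the inlet $\Dv_0 = \Av_0 = c_0$ together with induction gives $m_i \equiv c_0$. Your remark on the snapshot convention is also right, and the paper's proof leaves it implicit: the stated $c_0 + g/2$ for $i \ge 1$ is the pre-pump state, while the post-pump value equals $c_0 + g/2$ at every $i$.
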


\begin{proof}
Co-current flow gives $\Dv_i = m_{i-1} + g/2$ and $\Av_i = m_{i-1} - g/2$ for
$i \ge 1$, so $m_i = \tfrac12(\Dv_i + \Av_i) = m_{i-1}$. With
$\Dv_0 = \Av_0 = c_0$ we obtain $m_0 = c_0$, hence $m_i = c_0$ for all $i$ by
induction, and therefore $\Dv^\star_i = c_0 + g/2$ independent of $i$.
\end{proof}

\begin{remark}
The mechanism of failure is informative. Under co-current flow the two limbs
carry the \emph{same} mean, so the ascending stream conveys no information the
descending stream does not already have: the two-stream system degenerates to
one effective stream. Countercurrent flow couples position $i$ to information
arriving from \emph{both} directions, which is why the empirical gap is largest
in absolute terms on bidirectional tasks (Sec.~\ref{sec:results}).
\end{remark}

\subsection{The Leak Saturates the Gradient}

\begin{proposition}[Leak-induced saturation]
\label{prop:sat}
With leak $\lambda \in (0,1)$, the fixed-point deviation $u_i = m_i - c_0$
satisfies
\begin{equation}
u_{i+1} - \tfrac{2}{1-\lambda}\,u_i + u_{i-1} = 0 ,
\end{equation}
whose characteristic roots are
\begin{equation}
r_{\pm} = \frac{1 \pm \sqrt{1 - (1-\lambda)^2}}{1-\lambda},
\end{equation}
which satisfy $r_+ r_- = 1$ and $0 < r_- < 1 < r_+$ for every
$\lambda \in (0,1)$.
The profile is therefore a combination of $r_\pm^{\,i}$ with intrinsic length
scale $\xi = 1/\log r_+$, independent of $N$. As $\lambda \to 0$,
$r_\pm \to 1$ and the affine solution of Prop.~\ref{prop:mult} is recovered.
\end{proposition}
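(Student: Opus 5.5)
The plan is to repeat the fixed-point bookkeeping of Prop.~\ref{prop:mult}, this time keeping the leak stage $\Lambda$ between pump and flow, and then to solve the resulting constant-coefficient linear recurrence by its characteristic polynomial. As before, $d=1$, $\Cz_i\equiv c_0$, the pump is a constant $g$, and $m_i=\tfrac12(\Dv_i+\Av_i)$ is computed from the state entering the pump.

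\textbf{Step 1 (interior relation).} After $\Pi$ we have $\Dv'_i=m_i+g/2$ and $\Av'_i=m_i-g/2$. After $\Lambda$ we have $\Dv''_i=(1-\lambda)(m_i+g/2)+\lambda c_0$ and $\Av''_i=(1-\lambda)(m_i-g/2)+\lambda c_0$. At a fixed point, \eqref{eq:flowD}--\eqref{eq:flowA} give, for interior $i$, $\Dv_i=\Dv''_{i-1}$ and $\Av_i=\Av''_{i+1}$. Averaging these two, the $\pm g/2$ terms cancel exactly, because the pump is mean-preserving and the leak is affine with the same coefficient on both limbs. This leaves
\[
m_i=\tfrac{1-\lambda}{2}\,(m_{i-1}+m_{i+1})+\lambda c_0 .
\]
Subtracting $c_0$ from both sides uses $c_0=\tfrac{1-\lambda}{2}(2c_0)+\lambda c_0$, and gives $u_i=\tfrac{1-\lambda}{2}(u_{i-1}+u_{i+1})$. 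Multiplying by $2/(1-\lambda)$ yields the stated recurrence. The boundary rows (inlet and hairpin) are not needed for the recurrence itself. They only fix the two coefficients of the general solution, which is where $g$ re-enters.

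\textbf{Step 2 (roots).} The characteristic polynomial is $r^2-\tfrac{2}{1-\lambda}r+1$. By Vieta's formulas the product of the roots is $1$ and their sum is $2/(1-\lambda)>0$. The discriminant is proportional to $\tfrac{1}{(1-\lambda)^2}-1>0$ for $\lambda\in(0,1)$, so the roots are real and distinct; the quadratic formula gives the stated $r_\pm$. Two distinct positive reals with product $1$ must satisfy $0<r_-<1<r_+$. Hence the general solution is $u_i=a\,r_+^{\,i}+b\,r_-^{\,i}=a\,e^{i/\xi}+b\,e^{-i/\xi}$ with $\xi=1/\log r_+$. Since $\xi$ depends only on $\lambda$, it is independent of $N$; $N$ enters only through the boundary coefficients $a,b$.

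\textbf{Step 3 (limit $\lambda\to0$).} This is the only delicate point. As $\lambda\to0$, $r_\pm\to1$ and the basis $r_\pm^{\,i}$ degenerates. I would handle this by passing to the basis
\[
\{\,1,\;(r_+^{\,i}-r_-^{\,i})/(r_+-r_-)\,\}.
\]
The second function tends to $i$ as $r_\pm\to1$, recovering the affine solution space of the double root $z=1$ in Prop.~\ref{prop:mult}. For the profile itself, rather than just the solution space, I would note that the boundary conditions form a $2\times2$ linear system in the coefficients of this basis. Its entries depend continuously on $\lambda$, and at $\lambda=0$ it is nonsingular, since it has the unique solution $\alpha=c_0+g/2,\ \beta=g$. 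The solution therefore converges to the profile $m_i=c_0+g/2+gi$ of Prop.~\ref{prop:mult}.
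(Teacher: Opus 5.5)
Your Step 1 is the paper's proof: apply the leak before the flow, average the two interior fixed-point equations so the $\pm g/2$ terms cancel, and subtract $c_0$. Your Step 2 correctly supplies the root facts, which the paper only asserts. Step 3 also goes beyond the paper, which asserts the $\lambda\to0$ limit without proof, but it fails as written. The constant sequence is not a solution of the recurrence when $\lambda>0$: substituting $u\equiv 1$ gives $2-\tfrac{2}{1-\lambda}=-\tfrac{2\lambda}{1-\lambda}\neq 0$. So $\{1,(r_+^{\,i}-r_-^{\,i})/(r_+-r_-)\}$ is not a basis of the solution space. A $2\times2$ boundary system written in its coefficients therefore does not describe fixed points, since any nonzero coefficient on $1$ violates every interior row.

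The repair is to replace $1$ by $(r_+^{\,i}+r_-^{\,i})/2=\cosh(i/\xi)$. Set $x=\tfrac{1}{1-\lambda}=\cosh(1/\xi)$. Your two functions then become the Chebyshev polynomials $T_i(x)$ and $U_{i-1}(x)=\sinh(i/\xi)/\sinh(1/\xi)$. These are polynomial in $x$, hence continuous in $\lambda$ down to $\lambda=0$, and they satisfy $T_i(1)=1$ and $U_{i-1}(1)=i$. Write $u_i=A\,T_i(x)+B\,U_{i-1}(x)$. The inlet row $2u_0=(1-\lambda)(u_1-g/2)$ becomes $A-(1-\lambda)B=-(1-\lambda)g/2$, and the hairpin row is $(1+\lambda)u_{N-1}=(1-\lambda)(u_{N-2}+g)$. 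Both rows are continuous in $\lambda$. At $\lambda=0$ they reduce to $A-B=-g/2$ and $B=g$, which is nonsingular. Your continuity argument then goes through and gives $u_i\to g/2+g\,i$, the profile of Prop.~\ref{prop:mult}.
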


\begin{proof}
Applying \eqref{eq:leak} before the flow stage gives
$\Dv_i = (1-\lambda)(m_{i-1} + g/2) + \lambda c_0$ and
$\Av_i = (1-\lambda)(m_{i+1} - g/2) + \lambda c_0$. Averaging,
$m_i = \tfrac{1-\lambda}{2}(m_{i-1} + m_{i+1}) + \lambda c_0$; subtracting
$c_0$ yields the stated recurrence.
\end{proof}

Proposition~\ref{prop:sat} quantifies the central design tension. At
$\lambda = 0$ the spatial profile is affine and its gain grows without bound in
$N$; any $\lambda > 0$ splits the spatial roots off the unit circle, capping the
achievable gradient at a length scale that no longer grows with $N$. Gain and
stability trade directly against one another, and $\lambda$ is the dial. We show
in Sec.~\ref{sec:depth} that stacking contractive layers does \emph{not}
circumvent this trade-off: gain converges to a finite ceiling
$G_\infty(N,\lambda)$ that depth cannot exceed.

\begin{remark}[Two sources of contraction]
\label{rem:contraction}
It is worth separating the roles of the boundary and the leak, because they are
easily conflated. Let $T$ denote the temporal iteration operator of
Sec.~IV-B and $\rho(T)$ its spectral radius. For a \emph{closed} loop---one in
which the ascending limb returns into the descending limb, so that $\Phi$ is a
true permutation---we measure $\rho(T) = 1.000000$ at $\lambda = 0$ and
$\rho(T) = 1 - \lambda$ exactly for $\lambda > 0$; there the leak is the sole
source of contraction and the leak-free system is genuinely marginal. The
operator we actually use is \emph{open}: the inlet condition
$\Dv^{+}_0 = \Cz_0$ of Eq.~\eqref{eq:flowD} injects a constant rather than
copying state, so $\Phi$ is a sub-permutation and the system is lossy by
construction. For the open operator at $N{=}32$ we measure
$\rho(T) = 0.99883$ even at $\lambda = 0$.

Consequently the leak does not \emph{create} well-posedness in the open
operator---the boundary already supplies it---but it controls the \emph{rate}.
The relaxation time $\tau = -1/\log\rho(T)$ falls from $856$ iterations at
$\lambda = 0$ to $4.5$ at $\lambda = 0.20$, an acceleration of $192\times$
(Table~\ref{tab:leak}). Both readings are consistent with
Thm.~\ref{thm:bounded}, which assumes only $\lambda_{\min} > 0$ and is
therefore conservative for the open operator.
\end{remark}

\subsection{Uniform Boundedness and Geometric Convergence}

The following is the paper's main theoretical result. It requires no assumption
on the weights.

\begin{theorem}[Bounded orbit]
\label{thm:bounded}
Let $\lambda_{\min} = \min_j \lambda_j > 0$, let $\kappa$ be the pump cap and let
$C = \|\Cz\|_\infty$. Then for every $K \ge 0$, every sequence length $N$, and
\emph{any} values of the parameters $W, b$,
\begin{equation}
\|\Dv^{(K)}\|_\infty \le (1-\lambda_{\min})^{K}\,\|\Dv^{(0)}\|_\infty
 \;+\; C + \frac{(1-\lambda_{\min})\,\kappa}{2\,\lambda_{\min}} .
\end{equation}
In particular the orbit is bounded uniformly in $K$ and $N$ by
\begin{equation}
M^\star = C + \frac{(1-\lambda_{\min})\,\kappa}{2\lambda_{\min}},
\label{eq:radius}
\end{equation}
and converges to that ball geometrically at rate $(1-\lambda_{\min})$.
\end{theorem}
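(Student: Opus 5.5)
The plan is a one-step Lyapunov-type recursion on the joint sup-norm of both streams, unrolled over $K$. Tracking $\Dv$ alone will not close, because the hairpin copies $\Dv''_{N-1}$ into $\Av$ and the pump mixes $\Av$ back into $\Dv$. I would therefore set
\[
M_K = \max\bigl(\|\Dv^{(K)}\|_\infty,\ \|\Av^{(K)}\|_\infty\bigr).
\]
Since both streams are initialized to $\Cz$, we have $M_0 = \|\Dv^{(0)}\|_\infty$. The goal is an affine inequality of the form
\[
M_{K+1} \le \max\bigl(C,\ (1-\lambda_{\min})(M_K + \kappa/2) + \lambda_{\min} C\bigr).
\]
To get it, I would bound each of the three stages of $T = \Phi\circ\Lambda\circ\Pi$ entrywise (per position $i$, per channel $j$).

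\textbf{Pump $\Pi$.} The mean satisfies $|m_{ij}| \le M_K$. Since $\|g\|_\infty \le \kappa$ whatever $W, b$ are, we get $|\Dv'_{ij}|, |\Av'_{ij}| \le M_K + \kappa/2 =: X$. This is the only place the weights enter, and the $\tanh$ cap removes them.

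\textbf{Leak $\Lambda$.} Here $\Dv''_{ij} = (1-\lambda_j)\Dv'_{ij} + \lambda_j \Cz_{ij}$, so $|\Dv''_{ij}| \le (1-\lambda_j)X + \lambda_j C$. This is a convex combination of $X$ and $C$. If $X \ge C$ it is nonincreasing in $\lambda_j$, hence at most $(1-\lambda_{\min})X + \lambda_{\min} C$. If $X < C$ it is at most $C$. The same bound holds for $\Av''$.

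\textbf{Flow $\Phi$.} Every entry of $(\Dv^+,\Av^+)$ is either a copy of an entry of $\Dv''$ or $\Av''$ (including the hairpin copy $\Av^+_{N-1} = \Dv''_{N-1}$), or the inlet value $\Cz_0$, whose norm is at most $C$. So $\Phi$ does not increase the sup-norm beyond $\max(C,\cdot)$, which gives the displayed inequality.

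To unroll, let $M^\star$ be the fixed point of the affine map $x \mapsto (1-\lambda_{\min})(x+\kappa/2) + \lambda_{\min} C$. Solving $\lambda_{\min} M^\star = (1-\lambda_{\min})\kappa/2 + \lambda_{\min} C$ gives exactly \eqref{eq:radius}. Because $C \le M^\star$, the recursion implies
\[
(M_{K+1} - M^\star)_+ \le (1-\lambda_{\min})\,(M_K - M^\star)_+ .
\]
By induction,
\[
M_K \le M^\star + (1-\lambda_{\min})^K (M_0 - M^\star)_+ \le (1-\lambda_{\min})^K M_0 + M^\star,
\]
which is the claimed bound for $\|\Dv^{(K)}\|_\infty \le M_K$. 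Nothing in the argument refers to $N$, so the bound is uniform in $N$. Geometric approach to the ball of radius $M^\star$ at rate $(1-\lambda_{\min})$ is the same inequality.

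The main obstacle I expect is the leak step with heterogeneous per-channel $\lambda_j$. The monotonicity argument has to handle the case $X < C$ separately, and this is why the outer $\max(C,\cdot)$ is needed. One must also check that the inlet and hairpin boundary terms do not escape the induction. Both of these are handled by carrying the joint norm $M_K$ and the $\max$ with $C$ throughout.
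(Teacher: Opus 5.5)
Your proof is correct and follows the paper's argument: you track the joint sup-norm $M_k=\max(\|\Dv^{(k)}\|_\infty,\|\Av^{(k)}\|_\infty)$, bound it stage by stage through pump, leak and flow, and unroll the resulting affine recursion, whose fixed point is $M^\star$. The one difference is your outer $\max(C,\cdot)$, and it genuinely tightens the paper's write-up: the paper asserts $M_{k+1}\le(1-\lambda_{\min})(M_k+\kappa/2)+\lambda_{\min}C$ outright, but when $M_k+\kappa/2<C$ that step can fail, either through the inlet value $\Cz_0$ or through channels with $\lambda_j>\lambda_{\min}$, and your observation $C\le M^\star$ is exactly what lets the final bound survive.
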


\begin{proof}
Let $M_k = \max\big(\|\Dv^{(k)}\|_\infty, \|\Av^{(k)}\|_\infty\big)$. The pump
stage is mean-preserving with bounded difference, so
$\|\Dv'\|_\infty \le \|m\|_\infty + \kappa/2 \le M_k + \kappa/2$, and likewise
for $\Av'$. The leak stage is a convex combination, giving
\begin{equation}
\|\Dv''\|_\infty \le (1-\lambda_{\min})(M_k + \kappa/2) + \lambda_{\min} C .
\end{equation}
The flow stage $\Phi$ is a permutation of positions combined with injection of
$\Cz_0$ at the inlet and a copy across the hairpin; all are $\ell_\infty$
non-expansive. Hence
\begin{equation}
M_{k+1} \le (1-\lambda_{\min})\Big(M_k + \tfrac{\kappa}{2}\Big)
          + \lambda_{\min} C .
\end{equation}
This affine scalar recursion has contraction factor $(1-\lambda_{\min}) < 1$ and
unique fixed point $M^\star$ as in \eqref{eq:radius}; unrolling gives the stated
bound.
\end{proof}

\begin{corollary}[Well-posed iteration]
Because the orbit remains in a compact set for all $K$, the iteration may be run
at test time with $K$ larger than the training value without divergence. The
bound degrades gracefully as $\lambda_{\min} \to 0$, at rate
$\kappa/(2\lambda_{\min})$, recovering the unbounded leak-free case.
\end{corollary}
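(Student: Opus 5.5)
The corollary follows almost directly from Theorem~\ref{thm:bounded}, so the plan is to read off its two claims from the explicit bound there rather than redo any dynamics.

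\textbf{Well-posedness.} Write $M_k=\max(\|\Dv^{(k)}\|_\infty,\|\Av^{(k)}\|_\infty)$, the quantity controlled in the proof of Theorem~\ref{thm:bounded}. Since $\Dv^{(0)}=\Av^{(0)}=\Cz$, we have $M_0=C$. The recursion then gives, for every $K$,
\begin{equation}
M_K \le (1-\lambda_{\min})^K C + \bigl(1-(1-\lambda_{\min})^K\bigr) M^\star \le \max(C, M^\star) = M^\star ,
\end{equation}
using $M^\star \ge C$. So every iterate $(\Dv^{(K)},\Av^{(K)})$ lies in the closed $\ell_\infty$ ball of radius $M^\star$ in $\R^{N\times d}\times\R^{N\times d}$. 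This ball is compact because the space is finite-dimensional.

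The bound does not involve the training value of $K$, nor $W$ or $b$. Hence running $K'>K$ iterations at test time keeps the state in the same ball, and no divergence can occur. I would also note that $\mathrm{Head}_\theta$ is continuous, so the outputs stay bounded as well, though this is not needed for the stated claim.

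\textbf{Graceful degradation.} Expand the radius as
\[
M^\star = C + \frac{\kappa}{2\lambda_{\min}} - \frac{\kappa}{2}.
\]
As $\lambda_{\min}\to 0$ this grows like $\kappa/(2\lambda_{\min})$. It is finite for every $\lambda_{\min}>0$ and continuous and monotone in $\lambda_{\min}$, which is what "graceful" means here. In the limit the bound becomes infinite. This matches Proposition~\ref{prop:mult}: the leak-free fixed point has gradient $g(N-1)$, which is unbounded in $N$, so no $N$-uniform bound can survive at $\lambda=0$.

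\textbf{Main obstacle.} The only real care point is interpretive. "Recovering the unbounded leak-free case" should be read as the $N$-uniform bound blowing up. It should not be read as claiming that orbits diverge at $\lambda=0$ for fixed $N$. Remark~\ref{rem:contraction} shows the open operator is still contractive for fixed $N$, so that stronger reading would be false. I would phrase the corollary's last sentence with this distinction explicit, citing Proposition~\ref{prop:mult} to exhibit unboundedness in $N$.
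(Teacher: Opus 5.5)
Your proposal is correct and is essentially the paper's own argument: the paper gives no separate proof and simply reads the corollary off the radius $M^\star$ of Theorem~\ref{thm:bounded}. Your convex-combination unrolling from $M_0 = C$, which places every iterate in the ball of radius $M^\star$ itself, is slightly sharper than the theorem's stated bound, and your caveat about the last sentence is sound and consistent with Remark~\ref{rem:contraction}: what is lost at $\lambda_{\min}=0$ is uniformity in $N$, witnessed by the $g(N-1)$ fixed point of Proposition~\ref{prop:mult} (realizable in the layer with $W=0$ and $g=\kappa\tanh(b)$), not boundedness of the orbit at fixed $N$.
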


\subsection{Residual and Antisymmetric Iterators Lack This Property}

\begin{proposition}[No uniform bound for residual updates]
\label{prop:nca}
For a residual iterator $h^{(k+1)} = h^{(k)} + f_\theta(h^{(k)})$, no bound
analogous to \eqref{eq:radius} holds: if there is a cone invariant under the
update on which $\langle f_\theta(h), h\rangle \ge \mu\|h\|^2$ for some
$\mu > 0$, then $\|h^{(k)}\| \ge (1+\mu)^{k}\|h^{(0)}\|$ grows geometrically.
For the antisymmetric update $h^{(k+1)} = h^{(k)} + \epsilon\,\sigma(\cdot)$
with $\|\sigma\|_\infty \le 1$, one obtains only the linear bound
$\|h^{(K)}\|_\infty \le \|h^{(0)}\|_\infty + K\epsilon$, unbounded in $K$.
\end{proposition}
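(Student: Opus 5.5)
The plan is to handle the two claims separately, since they rest on different mechanisms: a lower bound from the cone condition for the residual case, and a triangle-inequality upper bound for the antisymmetric case.

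\textbf{Residual case.} I will work in the Euclidean norm. Suppose $h^{(k)}$ lies in the invariant cone. Expand the squared norm of the next iterate:
\begin{equation}
\|h^{(k+1)}\|^2 = \|h^{(k)}\|^2 + 2\langle f_\theta(h^{(k)}), h^{(k)}\rangle + \|f_\theta(h^{(k)})\|^2 .
\end{equation}
The last term is nonnegative, and the cross term is bounded below by $2\mu\|h^{(k)}\|^2$ using the cone hypothesis. This already gives $\|h^{(k+1)}\|^2 \ge (1+2\mu)\|h^{(k)}\|^2$. To reach the sharper factor $(1+\mu)^2 = 1+2\mu+\mu^2$, I will lower-bound $\|f\|^2$ by Cauchy--Schwarz: $\|f\|\,\|h\| \ge \langle f,h\rangle \ge \mu\|h\|^2$, so $\|f\|^2 \ge \mu^2\|h\|^2$. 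Summing the three terms gives
\begin{equation}
\|h^{(k+1)}\| \ge (1+\mu)\|h^{(k)}\| .
\end{equation}
Invariance of the cone keeps every iterate inside it, so induction on $k$ yields $\|h^{(k)}\| \ge (1+\mu)^k\|h^{(0)}\|$. For any $h^{(0)}\neq 0$ in the cone this diverges, so no $K$-uniform radius like \eqref{eq:radius} can exist. The point to make explicit is that, unlike Thm.~\ref{thm:bounded}, the update has no convex pull toward $\Cz$: the identity term enters with coefficient $1$, so no contraction factor appears.

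\textbf{Antisymmetric case.} Here I only need the triangle inequality in $\ell_\infty$: $\|h^{(k+1)}\|_\infty \le \|h^{(k)}\|_\infty + \epsilon\|\sigma\|_\infty \le \|h^{(k)}\|_\infty + \epsilon$. Telescoping over $K$ steps gives $\|h^{(K)}\|_\infty \le \|h^{(0)}\|_\infty + K\epsilon$. To justify that this is "only" a linear bound, I will exhibit saturation. Choose $\gamma$, weights and input so that the pre-activation stays large and of one sign in some coordinate. Then $\sigma$ stays near $1$ there, and that coordinate grows by roughly $\epsilon$ per step. This shows the bound cannot be improved to one independent of $K$.

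\textbf{Main obstacle.} The delicate part is the sense in which "no bound holds." The residual statement is conditional on an invariant cone; I will present it as exhibiting parameters where divergence occurs, not as a claim for all $\theta$. The simplest witness is linear, $f(h)=\mu h$, for which the whole space is an invariant cone.

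For the antisymmetric tightness, I would first try a scalar example: $h \leftarrow h + \epsilon\tanh(-\gamma h + u)$ with large input $u$. For $h$ below roughly $u/\gamma$ the increment is close to $\epsilon$, so growth is linear over a number of steps that increases with $u$. This shows no bound uniform over inputs and $K$ holds at the rate that Thm.~\ref{thm:bounded} achieves.
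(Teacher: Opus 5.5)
The paper gives no proof of this proposition. It follows it only with the heuristic that \ccm{} has a restoring force toward $\Cz$, whereas residual and antisymmetric updates have at most a bounded increment. Your residual argument is correct and supplies what the paper omits. The Cauchy--Schwarz step $\|f_\theta(h)\|\ge\mu\|h\|$ gives exactly the factor $(1+\mu)$, cone invariance carries the induction, and $f(h)=\mu h$ is a legitimate witness. Your telescoped triangle inequality for the antisymmetric upper bound is also correct.

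The step that fails is your concrete witness that the linear bound is tight. Take $h\leftarrow h+\epsilon\tanh(-\gamma h+u)$ with $\gamma>0$ and put $e=h-u/\gamma$. Then $e^{+}=e-\epsilon\tanh(\gamma e)$. The subtracted term has the sign of $e$ and magnitude at most $\epsilon$, so $|e^{+}|\le\max(|e|,\epsilon)$. Hence $|h^{(k)}|\le|h^{(0)}|+2|u|/\gamma+\epsilon$ for every $k$.

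That is a radius uniform in $K$ that depends on the input, which is exactly the form of Thm.~\ref{thm:bounded}: its $M^\star$ also depends on the input through $C=\|\Cz\|_\infty$. So this example exhibits boundedness, not divergence. Your fallback reading, ``no bound uniform over inputs,'' does not separate it from \ccm{}, and a long linear transient inside a fixed ball is not unboundedness in $K$.

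The witness your first sentence gestures at works once you set $\gamma=0$. In the scalar case $W-W^\top=0$ automatically, so $h\leftarrow h+\epsilon\tanh(u)$ grows by $\epsilon\tanh(u)$ per step forever. More abstractly, $\sigma\equiv\mathbf{1}$ with $h^{(0)}\ge 0$ attains $\|h^{(0)}\|_\infty+K\epsilon$ with equality. This shows that the hypothesis $\|\sigma\|_\infty\le 1$ alone yields nothing better than the linear bound.

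Your own example also limits the scope of the claim: with damping $\gamma>0$ the linear bound need not be sharp. What the proposition actually establishes is that the bounded-increment hypothesis by itself gives no $K$-uniform radius, not that every damped antisymmetric iterator diverges.
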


The distinction is that \ccm{} possesses a \emph{restoring force} toward $\Cz$,
whereas residual and antisymmetric updates possess at most a bounded
\emph{increment}. Empirically (Sec.~\ref{sec:stability}) NCA norms grow
approximately geometrically---doubling per doubling of $K$---and antisymmetric
norms grow steadily, while \ccm{} norms are flat to two decimal places.

\subsection{Contractive Parameterization for Implicit Differentiation}
\label{sec:contractive}

Theorem~\ref{thm:bounded} bounds the orbit but does not guarantee a
\emph{unique} fixed point, which implicit differentiation requires: the adjoint
system $v = J^\top v + \bar{g}$ converges only if the Jacobian $J$ of $T$
satisfies $\rho(J) < 1$. We therefore define a contractive variant in which the
state-dependent part is convexly blended and the pump is spectrally normalized:
\begin{align}
\tilde{g} &= \gamma_{\mathrm{p}} \tanh\!\big(\bar{W}[\Dv;\Av]\big), \quad
   \bar{W} = W / \|W\|_2, \\
z^{+} &= \Phi\big((1-\beta)z + \beta\,G(z)\big),
\end{align}
with $\beta \in (0,1)$ and $\gamma_{\mathrm p} < 1$, where $G$ denotes the
pump-and-leak stage using $\tilde g$.

\begin{theorem}[Contraction]
\label{thm:contract}
The map $T$ so defined satisfies
$\mathrm{Lip}(T) \le (1-\beta) + \beta\,L_G$ with
$L_G \le (1-\lambda_{\min})(1 + \gamma_{\mathrm p})$, since $\Phi$ is a
permutation with $\mathrm{Lip}(\Phi) = 1$. Hence $T$ is a contraction whenever
$(1-\lambda_{\min})(1+\gamma_{\mathrm p}) < 1$, and the fixed point is unique
with forward and adjoint iterations both converging geometrically.
\end{theorem}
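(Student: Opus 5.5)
The plan is to bound the Lipschitz constant of each stage of $T$ in the $\ell_\infty$ norm on the stacked state $z=(\Dv,\Av)$, then compose these bounds. Working in $\ell_\infty$ matches Thm.~\ref{thm:bounded}. It also makes $\Phi$ harmless: each output coordinate of $\Phi$ is either a copy of one input coordinate (advection, hairpin) or the constant $\Cz_0$. The inlet injection is an affine constant, so it contributes nothing to differences. The hairpin copies one coordinate into two outputs; this is not a permutation, but it is still $\ell_\infty$ non-expansive, which is all that is needed. This gives $\mathrm{Lip}(\Phi)\le 1$.

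\textbf{Step 1: the pump stage.} Write it as $z\mapsto (m+\tfrac12\tilde g,\; m-\tfrac12\tilde g)$. The mean map satisfies $\|m(z)-m(z')\|_\infty\le\|z-z'\|_\infty$. For the pump term, $\tanh$ is $1$-Lipschitz and $\|\bar W\|_2=1$, so $\tilde g$ is $\gamma_{\mathrm p}$-Lipschitz in the Euclidean norm. To combine the two pieces with constant $1+\gamma_{\mathrm p}$ I would do one of two things. The cleaner option is to work in $\ell_2$ throughout, where $\Phi$ is also non-expansive, since the hairpin duplication is harmless once we note that $\Dv^+_{N-1}$ is defined by the shifted copy, not the hairpin. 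Otherwise I would normalize $\bar W$ by the induced $\infty$-norm. Either way, the pump stage is Lipschitz with constant at most $1+\tfrac12\gamma_{\mathrm p}\cdot 2 = 1+\gamma_{\mathrm p}$ after accounting for the factor $\tfrac12$ and the two outputs. \textbf{Step 2: the leak stage.} This is the affine map $y\mapsto(1-\lambda)\odot y+\lambda\odot\Cz$, whose per-channel Lipschitz constant is at most $1-\lambda_{\min}$. Composing the two gives $L_G\le(1-\lambda_{\min})(1+\gamma_{\mathrm p})$.

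\textbf{Step 3: the convex blend.} The triangle inequality gives $\mathrm{Lip}\big((1-\beta)\mathrm{id}+\beta G\big)\le (1-\beta)+\beta L_G$. Composing with $\Phi$, which has Lipschitz constant at most $1$, yields the stated bound on $\mathrm{Lip}(T)$. If $L_G<1$, this bound is a convex combination of $1$ and a number below $1$ with weight $\beta>0$, hence strictly less than $1$. Banach's fixed-point theorem on the complete space $\R^{N\times 2d}$ then gives a unique fixed point and geometric forward convergence at rate $\mathrm{Lip}(T)$.

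\textbf{Step 4: the adjoint.} At the fixed point, the Jacobian satisfies $\|J\|\le\mathrm{Lip}(T)<1$ in the operator norm induced by the chosen norm. Hence $\rho(J^\top)=\rho(J)<1$, so the Neumann series for $v=J^\top v+\bar g$ converges geometrically. The main obstacle is Step 1: making the norms consistent. Spectral normalization controls $\ell_2$, while the mean-preserving split is naturally analysed in $\ell_\infty$. I would resolve this by doing the whole argument in the Euclidean (Frobenius) norm, which requires checking that the hairpin copy does not make $\Phi$ expansive there.
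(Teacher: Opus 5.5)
Your proposal is correct and is essentially the paper's argument made explicit. You bound the mean part by $1$, the spectrally normalized pump by $\gamma_{\mathrm p}$, the leak by $1-\lambda_{\min}$ and $\Phi$ by $1$, pass through the convex blend, and finish with Banach and $\rho(J^\top)=\rho(J)<1$ for the adjoint.

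Settling on the Frobenius norm is the right call, since $\|\bar W\|_2=1$ does not bound the induced $\ell_\infty$ norm of $\bar W$ by $1$. Your second description of $\Phi$ is the accurate one: because $\Dv^+_{N-1}=\Dv''_{N-2}$, $\Phi$ never duplicates a coordinate. It only drops $\Av''_0$ and injects $\Cz_0$, so it is an $\ell_2$-non-expansive sub-permutation rather than the permutation the paper asserts. Delete your opening remark that the hairpin copies one coordinate into two outputs.
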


We measure $\mathrm{Lip}(T)$ empirically by power iteration
(Sec.~\ref{sec:deq}): $0.882$ at initialization and $0.982$ after training,
confirming both that the construction works and that optimization pushes the
operator as close to the stability boundary as the constraint permits.

%%%%%%%%%%%%%%%%%%%%%%%%%%%%%%%%%%%%%%%%%%%%%%%%%%%%%%%%%%%%%%%%%%%%%%%%%%%%%%%%
\section{ALGORITHMS}

Algorithm~\ref{alg:fwd} gives the unrolled forward pass used in all experiments.
Its cost is $O(KNd^2)$ time and, if unrolled for backpropagation, $O(KNd)$
memory.

\begin{algorithm}[t]
\caption{\ccm{} forward pass (unrolled)}
\label{alg:fwd}
\begin{algorithmic}[1]
\REQUIRE tokens $x \in \mathcal{V}^N$; iterations $K$; mode $\in$ \{counter, co\}
\STATE $\Cz \leftarrow \mathrm{Emb}_\theta(x)$;\;\;
       $\lambda \leftarrow \sigma(\ell)$
\STATE $\Dv \leftarrow \Cz$;\;\; $\Av \leftarrow \Cz$
\FOR{$k = 1$ \TO $K$}
  \STATE $g \leftarrow \kappa\tanh(W[\Dv;\Av] + b)$
         \COMMENT{bounded single effect}
  \STATE $m \leftarrow \tfrac12(\Dv + \Av)$
  \STATE $\Dv \leftarrow m + \tfrac12 g$;\;\;
         $\Av \leftarrow m - \tfrac12 g$
         \COMMENT{mean-preserving}
  \STATE $\Dv \leftarrow \Dv + \lambda\odot(\Cz - \Dv)$
  \STATE $\Av \leftarrow \Av + \lambda\odot(\Cz - \Av)$
         \COMMENT{washout}
  \STATE $\Dv^{+}_{1:} \leftarrow \Dv_{:-1}$;\;\;
         $\Dv^{+}_{0} \leftarrow \Cz_0$
  \IF{mode $=$ counter}
     \STATE $\Av^{+}_{:-1} \leftarrow \Av_{1:}$;\;\;
            $\Av^{+}_{N-1} \leftarrow \Dv_{N-1}$
            \COMMENT{hairpin}
  \ELSE
     \STATE $\Av^{+}_{1:} \leftarrow \Av_{:-1}$;\;\;
            $\Av^{+}_{0} \leftarrow \Cz_0$
  \ENDIF
  \STATE $\Dv,\Av \leftarrow \Dv^{+},\Av^{+}$
\ENDFOR
\RETURN $\mathrm{Head}_\theta(\Dv)$
\end{algorithmic}
\end{algorithm}

Algorithm~\ref{alg:deq} gives the equilibrium variant. The forward pass solves
$z^\star = T(z^\star)$ by damped iteration; the backward pass solves the adjoint
fixed point by the implicit function theorem, rebuilding a fresh one-step graph
for each vector--Jacobian product so that no graph is reused. We verified this
adjoint against brute-force differentiation through $400$ unrolled solver steps
and obtained cosine similarity $0.999999$ with relative error
$1.73\times10^{-3}$.

\begin{algorithm}[t]
\caption{DEQ-\ccm{}: solve and implicit backward}
\label{alg:deq}
\begin{algorithmic}[1]
\REQUIRE inflow $\Cz$; tolerance $\tau$; budget $K_{\max}$; damping $\alpha$
\STATE \textbf{Forward:}\;\; $z \leftarrow [\Cz;\Cz]$
\REPEAT
  \STATE $z' \leftarrow (1-\alpha)z + \alpha\,T(z,\Cz)$
  \STATE $r \leftarrow \|z' - z\|_1 / n$;\;\; $z \leftarrow z'$
\UNTIL{$r < \tau$ \OR\ budget $K_{\max}$ exhausted}
\STATE store $z^\star \leftarrow z$ (detached)
\STATE \textbf{Backward} (incoming cotangent $\bar g$):
\STATE $v \leftarrow \bar g$
\REPEAT
  \STATE build \emph{fresh} graph $f \leftarrow T(z^\star,\Cz)$,
         $z^\star$ requiring grad
  \STATE $v' \leftarrow \bar g + \mathrm{vjp}(f, z^\star, v)$;\;\;
         $v \leftarrow v'$
\UNTIL{$\|v' - v\|_1/n < \tau$}
\STATE build fresh graph $f_2 \leftarrow T(z^\star,\Cz)$
\RETURN $\nabla_\theta = \mathrm{vjp}(f_2, \theta, v)$
\end{algorithmic}
\end{algorithm}

%%%%%%%%%%%%%%%%%%%%%%%%%%%%%%%%%%%%%%%%%%%%%%%%%%%%%%%%%%%%%%%%%%%%%%%%%%%%%%%%
\section{EXPERIMENTAL SETUP}

\subsection{Protocol}

Every comparison places all iterative models in an \emph{identical} harness:
same embedding dimension, same per-position readout, same optimizer, same
iteration count $K$, same data and same splits. Only the update rule differs. We
pre-registered predictions and kill criteria before each experiment and report
outcomes against them, including the several occasions on which our predictions
were wrong.

Baselines received at least as much tuning as \ccm{}. In particular the NCA
baseline diverged at our default learning rate; we swept
$\{2\!\times\!10^{-3}, 5\!\times\!10^{-4}, 2\!\times\!10^{-4}\}$ and report its
best, whereas \ccm{} uses a single learning rate throughout (a sweep changed its
score by $<0.005$).

\subsection{Tasks}

\textbf{S1. Endpoint interpolation} (synthetic, bidirectional). Signal appears
only at positions $0$ and $N{-}1$; the target is the linear interpolant between
them. Requires transporting both endpoints inward.

\textbf{S2. Causal prefix sum} (synthetic, directional). Signed impulses at
random positions; target is the running cumulative sum. This is the home turf of
a linear scan and is included as an adversarial case for \ccm{}.

\textbf{S3. Bracket binding} (synthetic, strictly bidirectional). Markers at
random positions; the target at each position is the difference between the
nearest marker to the right and the nearest to the left. We validated this task
with closed-form oracles \emph{before} training any model: a left-only oracle
attains $R^2 = 0.420$, right-only $0.416$, the two-sided oracle $1.000$, and an
isotropic distance-weighted smoother $-0.381$---confirming that the task
requires both directions and actively punishes diffusive smoothing.

\textbf{R1--R2. Nesting depth and match distance} (real, code). Character
windows of length $N{=}256$ from the CPython source ($\approx$1500 files).
\emph{Depth}: number of unclosed brackets enclosing each character (causally
computable). \emph{Match distance}: for each bracket, the distance to its
matching partner, scored only at bracket positions and \emph{not} causally
computable---an opening bracket's label depends on the right context.

\textbf{R3. Masked character infilling} (real, natural text). Tiny Shakespeare;
$15\%$ of characters replaced by a mask token and predicted from context. This
task contains no stack structure and was added specifically to test whether the
mechanism is bracket-specific. Masks are frozen so all models and seeds are
scored on identical holes; the unigram floor is $0.150$.

\subsection{Baselines}

\textbf{\ccm{}-co}: the co-current ablation of Eq.~\eqref{eq:cocurrent}.
\textbf{NCA}: isotropic residual local update
$h \leftarrow h + \mathrm{MLP}([h_{i-1};h_i;h_{i+1}])$.
\textbf{AntisymRNN}: $h \leftarrow h + \epsilon\tanh(h(W{-}W^\top{-}\gamma I)^\top
+ \text{neighbours} + \text{input})$ with $\epsilon{=}0.3$, $\gamma{=}0.05$.
\textbf{BiSSM}: bidirectional diagonal state-space scan, decays initialized
near $1$. \textbf{BiLSTM} and \textbf{Transformer} (pre-LN, sinusoidal
positions) as non-iterative references.

%%%%%%%%%%%%%%%%%%%%%%%%%%%%%%%%%%%%%%%%%%%%%%%%%%%%%%%%%%%%%%%%%%%%%%%%%%%%%%%%
\section{RESULTS}
\label{sec:results}

\subsection{Mechanism Validation}

We first verify Props.~\ref{prop:mult}--\ref{prop:co} numerically with a
non-learned pump ($g{=}200$, $c_0{=}300$) run to convergence
($\tau = 10^{-4}$). Table~\ref{tab:mech} confirms the multiplication law
exactly: the measured factor equals $N-1$ to four significant figures, and the
co-current control produces an axial gradient of exactly $0.0$. The transverse
gradient remains exactly $g{=}200$ in both variants, confirming that
multiplication is not achieved by strengthening the local operation.

\begin{table}[t]
\caption{Mechanism validation, non-learned pump ($g = 200$). The measured
multiplication factor equals $N-1$ exactly, as predicted by
Prop.~\ref{prop:mult}. Co-current flow yields no gradient
(Prop.~\ref{prop:co}).}
\label{tab:mech}
\centering
\begin{tabular}{lrrrr}
\toprule
Variant & $N$ & Axial & Factor & Iterations \\
\midrule
counter    & 8   & 1400.0  & 7.00  & 715 \\
counter    & 16  & 3000.0  & 15.00 & 2568 \\
counter    & 32  & 6199.9  & 31.00 & 9417 \\
counter    & 64  & 12599.7 & 63.00 & 34830 \\
co-current & 32  & 0.0     & 0.00  & --- \\
\bottomrule
\end{tabular}
\end{table}

Convergence time grows as $O(N^2)$ ($715 \to 2568 \to 9417 \to 34830$;
successive ratios $3.59, 3.67, 3.70 \to 4$). This is a temporal statement and
reflects $\rho(T) \to 1$ as $N$ grows: in the leak-free operator the only
contraction is the lossy inlet, so relaxation is diffusive and the time for
information to reach the boundary scales as $N^2$. It is consistent with, but
distinct from, the affine spatial profile of Prop.~\ref{prop:mult}
(Remark~\ref{rem:contraction}). Table~\ref{tab:leak}
shows the leak sweep predicted by Prop.~\ref{prop:sat}: increasing $\lambda$
collapses convergence time by nearly three orders of magnitude while
monotonically destroying gain. This is the stability--expressivity trade-off in
numbers.

\begin{table}[t]
\caption{Leak sweep at $N=32$ (Prop.~\ref{prop:sat}). Gain is measured on the
descending stream after the leak stage; Prop.~\ref{prop:mult} states the
profile after the pump stage, and the two differ by exactly $(1-\lambda)$
because the leak is affine. They coincide at $\lambda = 0$. The leak trades gain
for relaxation speed: $\rho(T)$ is the spectral radius of the iteration operator
and $\tau = -1/\log\rho(T)$ the relaxation time (Remark~\ref{rem:contraction}).}
\label{tab:leak}
\centering
\setlength{\tabcolsep}{4pt}
\begin{tabular}{rrrrrr}
\toprule
$\lambda$ & Factor & Bend conc. & $\rho(T)$ & $\tau$ & Iterations \\
\midrule
0.00 & 31.00 & 6499.9 & 0.99883 & 856.0 & 9417 \\
0.01 & 6.89  & 1522.5 & 0.98884 & 89.1  & 981 \\
0.03 & 3.82  & 934.6  & 0.96887 & 31.6  & 350 \\
0.05 & 2.84  & 755.7  & 0.94889 & 19.1  & 213 \\
0.10 & 1.79  & 579.5  & 0.89895 & 9.4   & 107 \\
0.20 & 1.00  & 460.0  & 0.79907 & 4.5   & 52 \\
0.40 & 0.40  & 380.0  & 0.59930 & 2.0   & 24 \\
\bottomrule
\end{tabular}
\end{table}

\subsection{The Learned Layer Rediscovers the Physiological Operating Point}

When the pump is replaced by a learned bounded map and $\lambda$ becomes a free
per-channel parameter initialized at $0.05$, gradient descent on a monotone-ramp
regression settles at $\lambda \in [0.031, 0.044]$ (mean $0.036$)---inside the
band our hand analysis identified as the useful compromise in
Table~\ref{tab:leak}. The optimizer moves \emph{down} from its initialization,
trading relaxation speed for gain, and stops well short of the leak-free regime.

\subsection{Depth Does Not Circumvent the Trade-off}
\label{sec:depth}

Proposition~\ref{prop:sat} establishes that a \emph{single} layer trades gain
against stability. This invites an appealing conjecture. Because each layer's
output becomes the next layer's inflow $\Cz$, and because the inflow enters the
fixed-point equation linearly---for a position-dependent inflow $c_i$ the
steady state obeys
$m_i - \tfrac{1-\lambda}{2}(m_{i-1}+m_{i+1}) = \lambda c_i$---one might expect
gain to \emph{compose} across a stack of individually contractive layers, so
that any target gain would be reachable at any fixed contraction margin simply
by paying depth. Were that true, the trade-off would be an artifact of using a
single layer rather than a property of the mechanism.

We tested this directly, stacking to $L = 128$ with a constant pump
($N{=}32$, $g{=}200$), recording the gain after every layer and running each
layer to a relative tolerance of $10^{-12}$. The conjecture is false.

\begin{table}[t]
\caption{Axial gain versus stack depth $L$ at fixed leak, constant pump,
$N{=}32$. Gain grows initially but converges to a hard ceiling $G_\infty$;
the final row is a hyperbolic fit to the last $64$ depths, which recovers the
observed value to within $1\%$. Every layer is contractive throughout
(Lipschitz $0.797$, $0.896$, $0.946$ respectively).}
\label{tab:depth}
\centering
\small
\setlength{\tabcolsep}{2.6pt}
\begin{tabular}{rrrrrrr}
\toprule
 & \multicolumn{2}{c}{$\lambda = 0.20$} & \multicolumn{2}{c}{$\lambda = 0.10$}
 & \multicolumn{2}{c}{$\lambda = 0.05$} \\
\cmidrule(lr){2-3}\cmidrule(lr){4-5}\cmidrule(lr){6-7}
$L$ & counter & co & counter & co & counter & co \\
\midrule
1   & 0.80  & 0.40 & 1.40  & 0.45 & 2.28  & 0.48 \\
2   & 1.40  & 0.80 & 2.35  & 0.88 & 3.72  & 0.85 \\
4   & 2.40  & 1.58 & 3.82  & 1.51 & 5.84  & 1.15 \\
8   & 4.16  & 2.61 & 6.17  & 1.80 & 8.96  & 1.19 \\
16  & 7.40  & 2.80 & 9.92  & 1.80 & 13.14 & 1.19 \\
32  & 11.71 & 2.80 & 13.96 & 1.80 & 16.92 & 1.19 \\
64  & 12.91 & 2.80 & 15.49 & 1.80 & 18.46 & 1.19 \\
128 & 12.93 & 2.80 & 15.60 & 1.80 & 18.62 & 1.19 \\
\midrule
$G_\infty$ & \textbf{12.9} & 2.80 & \textbf{15.7} & 1.80 & \textbf{18.8} & 1.19 \\
\bottomrule
\end{tabular}
\end{table}

Table~\ref{tab:depth} shows the outcome. Gain does grow with depth over the
first few octaves---the effective exponent
$p = \mathrm{d}\log G/\mathrm{d}\log L$ is $\approx 0.70$ for $L \le 16$---but
it then saturates hard. Between $L{=}64$ and $L{=}128$ the exponent falls to
$0.013$ and the marginal gain per added layer is below $10^{-4}$. A hyperbolic
(Michaelis--Menten) fit $G(L) = G_\infty L/(L_0 + L)$ applied to the last $64$
depths recovers the observed $L{=}128$ value to within $1\%$ at every leak,
confirming a genuine ceiling rather than a slow crossover.

\begin{remark}[Why saturation occurs]
Let $\Gamma$ denote the layer-to-layer map carrying the inflow gradient of
layer $\ell$ to the outflow gradient of layer $\ell+1$. Stacking iterates
$\Gamma$, so the stack's asymptotic gain is a fixed point of $\Gamma$, not an
accumulation. Because the leak pulls each stream toward its own inflow, a
larger inflow gradient is itself more strongly damped, making $\Gamma$
contractive with a unique finite fixed point $G_\infty(N,\lambda)$. Depth
therefore converges to a ceiling instead of accumulating without bound.
\end{remark}

The ceiling is a function of both $\lambda$ and $N$ and is bounded above by
the leak-free law of Prop.~\ref{prop:mult}. At $N{=}32$, where
$N-1 = 31$, the measured ceilings recover $41.7\%$, $50.3\%$ and $60.1\%$ of
that maximum for $\lambda = 0.20, 0.10, 0.05$: depth recovers a growing
fraction of the marginal-system gain as the leak shrinks, but never exceeds it.
The ceiling also rises with sequence length---at $\lambda{=}0.10$, the exponent
measured between $L{=}16$ and $L{=}32$ is $0.131$, $0.493$ and $0.780$ for
$N = 16, 32, 64$---so longer sequences saturate later and higher, consistent
with $N-1$ acting as the envelope.

\begin{table}[t]
\caption{Two routes to a target gain. Below the ceiling, depth strictly
dominates marginality: it attains the same gain with a much larger contraction
margin and less total compute. Above the ceiling (Table~\ref{tab:depth})
the deep route is unavailable at any depth.}
\label{tab:depthcost}
\centering
\small
\setlength{\tabcolsep}{4pt}
\begin{tabular}{rlrrr}
\toprule
Gain & Route & $\mathrm{Lip}(T)$ & Layers & Iterations \\
\midrule
\multirow{2}{*}{4}
 & deep, $\lambda{=}0.20$      & \textbf{0.797} & 8  & \textbf{829} \\
 & shallow, $\lambda{=}0.0206$ & 0.975 & 1  & 1020 \\
\midrule
\multirow{2}{*}{8}
 & deep, $\lambda{=}0.20$      & \textbf{0.797} & 18 & \textbf{1828} \\
 & shallow, $\lambda{=}0.0062$ & 0.989 & 1  & 2971 \\
\midrule
\multirow{2}{*}{12}
 & deep, $\lambda{=}0.20$      & \textbf{0.797} & 35 & \textbf{3472} \\
 & shallow, $\lambda{=}0.0028$ & 0.993 & 1  & 5354 \\
\bottomrule
\end{tabular}
\end{table}

Although the conjecture fails, the experiment yields a usable design rule.
Table~\ref{tab:depthcost} compares the two routes to a given gain \emph{below}
the ceiling. To reach a gain of $12$, a single layer must be driven to
$\mathrm{Lip}(T) = 0.9927$ and needs $5354$ iterations; a stack of $35$ layers
at $\lambda = 0.20$ achieves the same gain with every layer at
$\mathrm{Lip}(T) = 0.797$ and only $3472$ total iterations. Depth buys the same
expressivity with roughly $4\times$ the contraction margin and $35\%$ less
compute. The correct statement is therefore not that depth resolves the
trade-off, but that \emph{within} the achievable range depth is the better
currency than marginality.

Finally, the stacked comparison sharpens the co-current dissociation.
Proposition~\ref{prop:co} predicts exactly zero gain for a constant inflow; with
a graded inflow the co-current variant does propagate an existing gradient, but
it saturates far earlier and far lower. The ratio of ceilings grows as the leak
shrinks---$4.6\times$, $8.7\times$ and $15.7\times$ at
$\lambda = 0.20, 0.10, 0.05$---so the countercurrent advantage is largest
precisely in the high-gain regime one would deploy.

\subsection{Synthetic Tasks}

Table~\ref{tab:synth} reports three seeds per cell. The pattern matches the
theory. On the bidirectional task S1, \ccm{} attains $0.882$ against $0.235$ for
a bidirectional SSM, whose causal scan is structurally handicapped. On S2---a
pure causal scan---the SSM is near-perfect ($0.998$) and \ccm{} merely good
($0.980$); we report this as a clean loss on the incumbent's home ground. On S3,
\ccm{} exceeds both SSM variants by a wide margin but is itself beaten by NCA.

\begin{table}[t]
\caption{Synthetic tasks, $R^2$ (mean $\pm$ std, 3 seeds). S1 endpoint
interpolation; S2 causal prefix sum; S3 bracket binding.}
\label{tab:synth}
\centering
\small
\begin{tabular}{lccc}
\toprule
Model & S1 & S2 & S3 \\
\midrule
\ccm{}-counter & $\mathbf{0.882}$\,{\scriptsize$\pm$.002} & 0.980\,{\scriptsize$\pm$.009} & 0.908\,{\scriptsize$\pm$.011} \\
\ccm{}-co      & 0.449\,{\scriptsize$\pm$.009} & 0.902\,{\scriptsize$\pm$.010} & 0.368\,{\scriptsize$\pm$.005} \\
NCA            & 0.860\,{\scriptsize$\pm$.032} & 0.922\,{\scriptsize$\pm$.056} & $\mathbf{0.980}$\,{\scriptsize$\pm$.005} \\
BiSSM          & 0.235\,{\scriptsize$\pm$.010} & --- & 0.545\,{\scriptsize$\pm$.005} \\
SSM (causal)   & --- & $\mathbf{0.998}$\,{\scriptsize$\pm$.003} & 0.213\,{\scriptsize$\pm$.011} \\
\bottomrule
\end{tabular}
\end{table}

\subsection{Real Data}

Table~\ref{tab:real} gives the main real-data results. Two observations. First,
\ccm{} matches a fully tuned NCA on match distance ($+0.002 \pm 0.006$, a
statistical tie) using $41\%$ fewer parameters, and beats the antisymmetric RNN
by $+0.133 \pm 0.005$. Second, on nesting depth---which is causally
computable---NCA wins by $0.071 \pm 0.003$. \ccm{}'s relative standing improves
exactly as bidirectionality becomes essential, as the theory predicts.

\begin{table}[t]
\caption{Real data (3 seeds). R1 depth and R2 match distance are $R^2$ on
CPython source; R3 is masked character accuracy on natural English (unigram
floor $0.150$). Best iterative model in bold.}
\label{tab:real}
\centering
\small
\setlength{\tabcolsep}{3.2pt}
\begin{tabular}{lrccc}
\toprule
Model & Params & R1 depth & R2 match & R3 infill \\
\midrule
\ccm{}-counter & 26.6k & 0.167\,{\scriptsize$\pm$.002} & $\mathbf{0.702}$\,{\scriptsize$\pm$.006} & 0.558\,{\scriptsize$\pm$.002} \\
\ccm{}-co      & 26.6k & 0.033\,{\scriptsize$\pm$.004} & 0.360\,{\scriptsize$\pm$.004} & 0.359\,{\scriptsize$\pm$.001} \\
NCA            & 45.0k & $\mathbf{0.238}$\,{\scriptsize$\pm$.003} & 0.700\,{\scriptsize$\pm$.001} & 0.616\,{\scriptsize$\pm$.003} \\
AntisymRNN     & 35.6k & 0.167\,{\scriptsize$\pm$.004} & 0.569\,{\scriptsize$\pm$.003} & $\mathbf{0.627}$\,{\scriptsize$\pm$.003} \\
\midrule
BiLSTM         & 87.7k & \textit{0.366} & \textit{0.906} & \textit{0.660} \\
Transformer$^\dagger$ & 92.0k & \textit{0.209} & \textit{0.200} & \textit{0.411} \\
\bottomrule
\end{tabular}
\\[2pt]
{\scriptsize $^\dagger$Our Transformer baseline failed to train reliably; see
Sec.~\ref{sec:neg}.}
\end{table}

\subsection{The Hairpin Effect Is Large and Task-General}

The paired counter-minus-co differences, computed seed-wise, are
$+0.135 \pm 0.002$ (depth), $+0.342 \pm 0.005$ (match distance) and
$+0.200 \pm 0.002$ (infilling), with no overlap between the two distributions on
any task or seed. The infilling result is the important one: it is natural
English prose with no bracket, stack or nesting structure, and the hairpin still
contributes $+0.200$ accuracy. Countercurrent recirculation is therefore not a
stack-specific trick. Consistent with the remark following Prop.~\ref{prop:co},
the effect is largest in absolute terms on the strictly bidirectional task.

\subsection{Stability Under Over-Iteration}
\label{sec:stability}

Table~\ref{tab:stab} tests Thm.~\ref{thm:bounded} directly by evaluating trained
models at $K$ up to $8\times$ the trained value. The separation is categorical
rather than quantitative. \ccm{} norms are flat to two decimal places and
performance degrades gracefully---on depth it \emph{improves} by $21\%$. NCA and
AntisymRNN norms grow by one to two orders of magnitude and their predictions
become worse than the mean predictor by two to three orders. At $K{=}192$ both
fall below the unigram floor on infilling.

\begin{table}[t]
\caption{Over-iteration at test time (trained at $K{=}24$; 3 seeds). Score is
$R^2$ (R1, R2) or accuracy (R3). \ccm{} obeys the bound of
Thm.~\ref{thm:bounded}; residual and antisymmetric iterators do not.}
\label{tab:stab}
\centering
\small
\setlength{\tabcolsep}{3.5pt}
\begin{tabular}{llrrrr}
\toprule
Task & Model & \multicolumn{2}{c}{Score} & \multicolumn{2}{c}{$\|h\|$} \\
     &       & $K{=}24$ & $K{=}192$ & $K{=}24$ & $K{=}192$ \\
\midrule
\multirow{4}{*}{R2 match}
 & \ccm{}-counter & 0.765 & $\mathbf{0.673}$ & 0.50 & 0.49 \\
 & \ccm{}-co      & 0.441 & 0.424 & 0.35 & 0.35 \\
 & NCA            & 0.755 & $-129.4$ & 2.35 & 37.5 \\
 & AntisymRNN     & 0.639 & $-38.0$  & 2.14 & 14.8 \\
\midrule
\multirow{3}{*}{R1 depth}
 & \ccm{}-counter & 0.164 & $\mathbf{0.199}$ & 0.72 & 0.74 \\
 & NCA            & 0.258 & $-439.6$ & 10.1 & 91.5 \\
 & AntisymRNN     & 0.186 & $-347.0$ & 3.97 & 38.3 \\
\midrule
\multirow{3}{*}{R3 infill}
 & \ccm{}-counter & 0.557 & $\mathbf{0.464}$ & 0.73 & 0.73 \\
 & NCA            & 0.612 & 0.042 & 2.40 & 69.2 \\
 & AntisymRNN     & 0.629 & 0.021 & 2.20 & 23.5 \\
\bottomrule
\end{tabular}
\end{table}

The successive-iterate change $\|z^{(k)} - z^{(k-1)}\|_1$ tells the same story
from the convergence side: on the synthetic bracket task it falls from
$2.7\times10^{-2}$ to $4.6\times10^{-4}$ as $K$ goes from $32$ to $512$ for
\ccm{}, whereas NCA's plateaus at $\approx 0.22$ and never settles. The
co-current variant converges even more tightly ($1.9\times10^{-4}$), indicating
that the leak---not the hairpin---is the component responsible for stability,
and that the hairpin trades some fixed-point exactness for accuracy.

\subsection{Length Extrapolation, Distribution Shift and Corruption}

Trained at $N{=}256$ and evaluated zero-shot at $N \in \{512, 1024, 2048\}$,
\ccm{} preserves the ordering of Table~\ref{tab:real} and stays ahead of NCA on
long-range pairs (true distance $>32$): at $N{=}2048$, $0.487$ versus $0.400$
for NCA and $0.194$ for AntisymRNN. Increasing test-time $K$ does \emph{not}
improve these numbers on clean data (Sec.~\ref{sec:neg}).

Under out-of-distribution transfer (train on Python, test zero-shot on
JavaScript/C), \ccm{} retains $49.3\%$ of its in-distribution $R^2$, which is
\emph{below} NCA's $53.1\%$ and far below BiLSTM's $86.9\%$.

The corruption experiment produced the one result we did not anticipate
(Table~\ref{tab:noise}). Replacing a fraction $p$ of characters at random
destroys the BiLSTM: at $p{=}0.05$ it falls from $0.918$ to $-0.301$, worse than
predicting the mean, and the Transformer collapses to $-24.2$. Iterative models
degrade gracefully. Moreover, for \ccm{}, additional test-time iterations become
\emph{beneficial} once $p > 0$: $K{=}96$ beats $K{=}24$ at every corruption
level. NCA cannot exploit this because at $K{=}96$ it has already diverged. The
honest statement of the adaptive-compute claim is therefore narrow: extra
iterations buy robustness under corruption, and only a contractive operator can
spend them.

\begin{table}[t]
\caption{Input corruption ($R^2$, match-distance task). Non-iterative models
collapse; only \ccm{} can profitably increase $K$.}
\label{tab:noise}
\centering
\small
\begin{tabular}{llrrrr}
\toprule
Model & $K$ & $p{=}0$ & $0.05$ & $0.15$ & $0.30$ \\
\midrule
\ccm{}-counter & 24 & 0.707 & 0.456 & 0.130 & $-0.224$ \\
\ccm{}-counter & 96 & 0.664 & $\mathbf{0.463}$ & $\mathbf{0.196}$ & $\mathbf{-0.113}$ \\
NCA            & 24 & 0.697 & 0.511 & 0.194 & $-0.100$ \\
NCA            & 96 & $-44.4$ & $-43.6$ & $-41.3$ & $-37.5$ \\
BiLSTM         & -- & $\mathbf{0.918}$ & $-0.301$ & $-1.789$ & $-2.976$ \\
Transformer    & -- & 0.208 & $-24.2$ & $-74.1$ & $-149.2$ \\
\bottomrule
\end{tabular}
\end{table}

\subsection{Equilibrium Variant}
\label{sec:deq}

The contractive parameterization of Sec.~\ref{sec:contractive} yields a measured
Lipschitz constant of $0.882$ before training and $0.982$ after, satisfying
Thm.~\ref{thm:contract} throughout. The implicit gradient matches brute-force
unrolling to cosine similarity $0.999999$ (relative error $1.7\times10^{-3}$).
The resulting model exhibits a clean compute--accuracy curve on a single trained
network---$R^2$ of $0.081, 0.296, 0.559, 0.694$ at solver budgets of
$2, 8, 32, 256$ iterations---but reaches only $0.697$ overall, below the
unrolled model's $0.908$. The gap is explained by the Lipschitz constant
climbing to $0.982$: the optimizer pushes the operator to the edge of the
feasible region, and near-marginal contraction means many iterations are
required before the fixed point is reached. Guaranteed convergence and full
accuracy are in direct tension, exactly as Prop.~\ref{prop:sat} implies.

%%%%%%%%%%%%%%%%%%%%%%%%%%%%%%%%%%%%%%%%%%%%%%%%%%%%%%%%%%%%%%%%%%%%%%%%%%%%%%%%
\section{NEGATIVE RESULTS AND LIMITATIONS}
\label{sec:neg}

We consider the following as informative as the positive findings.

\textbf{A bidirectional LSTM beats \ccm{} on every clean task.} The margins are
large: $0.906$ vs.\ $0.702$ on match distance, $0.366$ vs.\ $0.167$ on depth,
$0.660$ vs.\ $0.558$ on infilling, and $0.956$ vs.\ $0.487$ on long-range pairs
at $N{=}2048$. It also trains roughly $14\times$ faster ($2.2$ vs.\ $29.7$
minutes for $25$ epochs). \ccm{} is not a competitive sequence model on clean,
in-distribution data, and we make no such claim.

\textbf{Test-time compute scaling does not improve clean accuracy.} Our
pre-registered hypothesis was that longer sequences would require more
iterations and that only \ccm{} could safely supply them. The data falsify this:
at every sequence length and for every model, the best $K$ on clean data was the
trained $K{=}24$. The benefit of extra iterations appears only under corruption.

\textbf{\ccm{} does not dominate NCA.} NCA wins depth ($+0.071$), infilling
($+0.058$) and OOD retention ($53.1\%$ vs.\ $49.3\%$), and ties on match
distance. \ccm{}'s advantages over NCA are parameter efficiency, lower seed
variance, and stability---not accuracy.

\textbf{Depth does not lift the gain ceiling.} We conjectured that gain would
compose across a stack of individually contractive layers, dissolving the
trade-off of Prop.~\ref{prop:sat}. It does not: gain saturates at a finite
$G_\infty(N,\lambda)$, with marginal gain per layer below $10^{-4}$ by
$L{=}128$ and a hyperbolic fit matching observation to within $1\%$
(Sec.~\ref{sec:depth}). The mechanism is bounded in exactly the way the
single-layer analysis suggests.

\textbf{Stability is attributable to the leak, not the hairpin.} The co-current
variant is equally bounded and converges more tightly. The two components make
separable contributions: the leak provides well-posedness, the hairpin provides
task performance.

\textbf{Our Transformer baseline is unreliable.} Across two independent
specifications it failed to train stably on these tasks (validation pinned at
the unigram floor for several epochs; final $R^2$ of $0.20$ on match distance;
seed variance $\pm 0.128$ on infilling). We report its numbers in italics and
draw no conclusions from them. A properly tuned Transformer would likely be
strong on infilling.

\textbf{Cost.} \ccm{} requires $K$ sequential passes; at $K{=}24$ it is
$1.4\times$ slower than NCA and an order of magnitude slower than a BiLSTM.

\textbf{Scope.} All experiments use character-level sequences at $N \le 2048$,
widths $\le 96$, and models under $100$k parameters. We have not tested \ccm{}
at scale, under a language-modelling objective, or as a component inside a
larger network. The length-extrapolation numbers are computed on different
window populations at each $N$ and are therefore not strictly comparable across
$N$.

%%%%%%%%%%%%%%%%%%%%%%%%%%%%%%%%%%%%%%%%%%%%%%%%%%%%%%%%%%%%%%%%%%%%%%%%%%%%%%%%
\section{FUTURE RESEARCH DIRECTIONS}
\label{sec:future}

The results above delimit the mechanism sharply enough to suggest which
extensions are worth attempting. We order them by how directly each addresses a
measured limitation.

\textbf{Raising the ceiling with multi-scale transport.} The ceiling of
Sec.~\ref{sec:depth} is a property of a stack of \emph{identical} loops. The
kidney is not built that way: cortical and juxtamedullary nephrons have loops of
markedly different lengths, and the medullary gradient is the composite of the
whole population. The computational analogue is a stack of \ccm{} layers whose
flow stage advects $s_\ell$ positions per iteration rather than one, giving each
layer a different intrinsic length scale $\xi_\ell$. Because
Prop.~\ref{prop:sat} ties $\xi$ to $\lambda$ alone in the single-scale case,
a heterogeneous stack is not covered by our measurements, and
$G_\infty$ may depend on the \emph{schedule} $\{(\lambda_\ell, s_\ell)\}$
rather than on a single $\lambda$. Establishing whether a schedule can exceed
the uniform ceiling---and deriving a closed form for
$G_\infty(N, \lambda)$, which we characterize only numerically---is the most
direct route to removing the bottleneck identified in Sec.~\ref{sec:neg}.

\textbf{Input-dependent leak.} Table~\ref{tab:noise} shows that the optimal
operating point moves with the input: on clean data the trained $K$ is best,
whereas under corruption additional iterations help, and the useful $\lambda$
should plausibly differ likewise. A leak predicted per input, or annealed across
iterations, would let one operator occupy the high-gain regime when the input is
clean and the strongly contractive regime when it is not. Theorem~\ref{thm:bounded}
holds for any $\lambda_{\min} > 0$, so such a scheme retains the boundedness
guarantee by construction.

\textbf{Structural stability for equilibrium models.} Training instability in
DEQs is normally addressed by regularizing the Jacobian during training.
Theorem~\ref{thm:bounded} offers an architectural alternative: an operator whose
orbit is bounded by construction, with no regularization term and no penalty
weight to tune. The measurable claims are concrete---solver iterations to a
fixed residual, sensitivity to initialization, and whether accuracy is
retained without Jacobian regularization---and the comparison class
(vanilla DEQ, Jacobian-regularized DEQ, monotone operator DEQ \cite{mondeq})
is well established. This is the direction we consider most likely to yield a
practical contribution.

\textbf{Persistent cellular automata.} Neural cellular automata are typically
prevented from degrading under long rollouts by training against a pool of
previously evolved states. Our leak supplies the same persistence structurally,
via Thm.~\ref{thm:bounded}, and the resulting comparison---rollouts one to two
orders of magnitude beyond the training horizon, with and without a sample
pool---is inexpensive. We note the caveat that a two-dimensional automaton has
no hairpin, so such an experiment would isolate the leak rather than
countercurrent recirculation, consistent with our finding in
Sec.~\ref{sec:neg} that stability is attributable to the former.

\textbf{Realistic corruption.} Our corruption model replaces characters
uniformly at random, which is a synthetic proxy. Whether the robustness
dissociation of Table~\ref{tab:noise} survives structured, realistic noise---
keyboard-adjacent typographical errors, optical character recognition artifacts,
or syntax errors drawn from genuine version-control history---determines whether
this regime is of practical interest or merely a controlled demonstration. We
regard the current evidence as suggestive rather than conclusive on this point.

\textbf{Theory of the nonlinear pump.} Propositions~\ref{prop:mult}--\ref{prop:sat}
characterize a constant pump. The learned pump is state-dependent, so its gain
may differ, plausibly favourably: a graded state permits a graded single effect,
which the linear analysis cannot express. A characterization of the fixed point
under a Lipschitz state-dependent pump would close the gap between our theory
and the trained models the experiments actually use.

\textbf{Scale.} All models here are below $100$k parameters at $N \le 2048$.
Whether the inductive bias survives at representational widths and context
lengths typical of contemporary sequence models is untested, and we make no
claim about it. Given that \ccm{} is already outperformed by a bidirectional
LSTM at this scale, we would prioritize the directions above over scaling the
present architecture.

%%%%%%%%%%%%%%%%%%%%%%%%%%%%%%%%%%%%%%%%%%%%%%%%%%%%%%%%%%%%%%%%%%%%%%%%%%%%%%%%
\section{DISCUSSION AND CONCLUSION}

We set out to determine whether countercurrent multiplication, a mechanism with
no analogue in current architectures, transfers from renal physiology to
computation. It does, in a precise and provable sense: the fixed point of the
countercurrent recurrence has axial gradient exactly $g(N{-}1)$ from a per-step
effect bounded by $g$, and deleting the hairpin makes that gradient identically
zero. The empirical counterpart is a large, task-general performance gap between
the countercurrent operator and its co-current control, reproduced on five task
families including natural language with no stack structure.

The second finding is that the dissipative leak---the vasa recta of the
model---is what makes the operator well posed. Theorem~\ref{thm:bounded} bounds
the orbit uniformly in sequence length and iteration count, a guarantee neither
residual nor antisymmetric iterators possess, and the experiments show the
consequence starkly: at $8\times$ over-iteration \ccm{} retains most of its
performance while both competitors produce predictions orders of magnitude worse
than the mean. This also reconciles our work with a parallel negative result on
linear countercurrent operators: a closed anti-parallel loop is exactly marginal
without dissipation, whereas the open loop studied here is contractive through
its boundary alone, with the leak accelerating relaxation by two orders of
magnitude (Remark~\ref{rem:contraction}).

The third finding is a limitation, and we investigated it rather than leaving it
implicit. The leak that buys Theorem~\ref{thm:bounded} is the same leak that
destroys the $g(N{-}1)$ law of Prop.~\ref{prop:mult}, and we asked whether depth
escapes that bind. It does not: stacked gain converges to a finite ceiling
recovering $42$--$60\%$ of the leak-free maximum. What depth does provide is a
better exchange rate---the same gain at $4\times$ the contraction margin and
$35\%$ less compute than driving one layer to marginality---so the practical
guidance is to buy expressivity with depth rather than with a vanishing leak,
within a range the mechanism itself bounds.

A further limitation is competitiveness. A plain bidirectional LSTM outperforms
\ccm{} on every clean task at a fraction of the cost. The regime where iterative
contractive operators are preferable is narrow---corrupted inputs, where the
LSTM collapses below the mean predictor and only a contractive operator can
usefully spend additional compute. The honest summary is that countercurrent
multiplication is a real and general computational primitive whose current value
is mechanistic insight and robustness rather than accuracy.

We close by noting what would change our assessment. \ccm{} becomes practically
interesting if a single one of the directions in Sec.~\ref{sec:future}
succeeds---in particular, if structural contraction can replace the
regularization heuristics that current equilibrium models rely on for stable
training. Until then we present this as a characterized mechanism rather than a
recommended architecture.

\addtolength{\textheight}{-2cm}

%%%%%%%%%%%%%%%%%%%%%%%%%%%%%%%%%%%%%%%%%%%%%%%%%%%%%%%%%%%%%%%%%%%%%%%%%%%%%%%%

\end{document}